\newcommand{\method}{\textsc{MAT}}
\newcommand{\cmark}{\ding{51}}
\newcommand{\xmark}{\ding{55}}
\definecolor{papercolor}{HTML}{0668E1}
\definecolor{windows_98}{HTML}{008080}
\newtheorem{theorem}{Theorem}[section]
\newtheorem{lemma}[theorem]{Lemma}
\newtheorem{setup}[theorem]{Setup}
\title{The Pitfalls of Memorization: When Memorization Hurts Generalization}
\author[1,*]{Reza Bayat}
\author[2,*]{Mohammad Pezeshki}
\author[2]{Elvis Dohmatob}
\author[2]{David Lopez-Paz}
\author[1, 2, 3]{Pascal Vincent}
\affiliation[1]{Mila, Université de Montréal}
\affiliation[2]{FAIR at Meta}
\affiliation[3]{CIFAR}
\abstract{
Neural networks often learn simple explanations that fit the majority of the data while memorizing exceptions that deviate from these explanations.
This behavior leads to poor generalization when the learned explanations rely on spurious correlations.
In this work, we formalize \textit{the interplay between memorization and generalization}, showing that spurious correlations would particularly lead to poor generalization when are combined with memorization.
Memorization can reduce training loss to zero, leaving no incentive to learn robust, generalizable patterns.
To address this, we propose \textit{memorization-aware training} (MAT), which uses held-out predictions as a signal of memorization to shift a model's logits.
MAT encourages learning robust patterns invariant across distributions, improving generalization under distribution shifts.

}
\date{\today}
\begin{document}

\maketitle

\vspace{-0.2cm}
\section{Introduction}
\vspace{-0.2cm}
Watching the stars at night gives the illusion that they orbit around the Earth, leading to the longstanding belief that our planet is the center of the universe—a view supported by great philosophers such as Aristotle and Plato.
This Earth-centric model could explain the trajectories of nearly all celestial bodies.
However, there were exceptions; notably, five celestial bodies, including Mars and Venus, occasionally appeared to move backward in their trajectories \citep{kuhn1992copernican}.
To account for these anomalies, Egyptian astronomer Claudius Ptolemy introduced ``epicycles''—orbits within orbits—a complex yet effective system for predicting these movements.
Over 1400 years later, Nicolaus Copernicus proposed an alternative model that placed the Sun at the center.
This Sun-centric view not only simplified the model but also naturally explained the previously perplexing backward motions.

Drawing a parallel to modern machine learning, we observe a similar phenomenon in neural networks.
Just as the Earth-centric model provided a incomplete explanation that required epicycles to account for exceptions, neural networks can learn simple explanations that work for the majority of their training data \citep{geirhos2020shortcut,shah2020pitfalls,dherin2022neural}.
These models might then treat minority examples—those that do not conform to the learned explanation—as exceptions \citep{zhang2021understanding}.
This becomes particularly problematic if the learned explanation is spurious, meaning it does not hold in general or is not representative of the true data distribution \citep{subg,sagawa2020investigation,gs,puli2023don}.

Empirical Risk Minimization (ERM), the standard learning algorithm for neural networks, can exacerbate this issue.
ERM enables neural networks to quickly capture spurious correlations and, with sufficient capacity, memorize the remaining examples rather than learning the true patterns that explain the entire dataset.
This has real-world implications; for example, neural networks designed to detect COVID-19 from x-ray images have been found to rely on spurious correlations, such as whether a patient is standing or lying down \citep{roberts2021common}.
This could be dangerously misleading, as a model that appears to excel in most cases may have actually captured a spurious correlation.
Memorization of the remaining minority examples can \textbf{fully mask a neural network's failure} to grasp the true patterns in the data, giving a false sense of reliability.
Again, the Earth-centric model of the universe was able to explain all celestial trajectories with complex epicycles but ultimately failed to reveal the true nature of our solar system.

Identifying whether a model with nearly perfect accuracy on the training data has learned generalizable patterns or merely relies on a mix of spurious correlations and memorization is critical.
The answer lies in the model's performance on \textbf{held-out} data, particularly on minority examples.
Metrics such as held-out average accuracy or more fine-grained group accuracies can help us identify a better model. A question that arises is: \textit{How can one use held-out performance signals to proactively guide a model toward learning generalizable patterns?}

Traditionally, held-out performance signals are mainly used for hyperparameter tuning and model selection.
However, in this work, we propose a novel approach that leverages these signals strategically to guide the learning process.
Towards this goal, our paper makes the following contributions:
\begin{itemize}

\item \textit{Formalizing the interplay between memorization and generalization}: We study how memorization affects generalization in an interpretable setup. We show that while spurious correlations are inherently problematic, they \textit{by themselves} do not always lead to poor generalization in neural networks. Instead, it is the combination of spurious correlations with memorization that leads to this problem. Our analysis shows that models trained with empirical risk minimization (ERM) tend to rely on spurious features for the majority of the data while memorizing exceptions, achieving zero training loss but failing to generalize on minority examples.
\item \textit{Introducing memorization-aware training (MAT)}: MAT is a novel learning algorithm that leverages the flip side of memorization by using held-out predictions to shift a model’s logits during training. This shift guides the model toward learning invariant features that generalize better under distribution shifts. Unlike ERM, which relies on the i.i.d. assumption, MAT is built upon an alternative assumption that takes into account the instability of spurious correlations across different data distributions.
\end{itemize}


\section{The Interplay between Memorization and Spurious Correlations in ERM}

\paragraph{Problem Setup and Preliminaries.}
We consider a standard supervised learning setup for a $K$-class classification problem. The data consists of input-label pairs $\{(\bm{x}_i, y_i)\}_{i=1}^n$, where $\bm{x}_i$ is the input vector and $y_i \in \{1, \ldots, K\}$ is the class label. Let $a_i$ denote any attribute or combination of attributes within $\bm{x}_i$ that may or may not be relevant for predicting the target $y_i$. The objective is to train a model $\hat{p}(y \mid \bm{x}; \bm{w})$ parameterized by $\bm{w}$. Given an input $\bm{x}_i$, let $f(\bm{x}_i; \bm{w}) \in \mathbb{R}^K$ represent the output logits of the model, then:
\begin{equation}
\hat{p}(y \mid \bm{x}_i; \bm{w}) = \text{softmax}(f(\bm{x}_i; \bm{w})).
\end{equation}

Under the i.i.d. assumption that $p(y, \bm{x})$ is invariant between training and test sets, empirical risk minimization (ERM) seeks to minimize the following loss over the training dataset:
\begin{equation}
\mathcal{L}^{\text{ERM}} = \frac{1}{n} \sum_{i=1}^n l(\hat{p}(y \mid \bm{x}_i; \bm{w}), y_i) + \frac{\lambda}{2}  || \bm{w} ||^2,
\label{eq:obj}
\end{equation}
where $l(., .)$ is the cross-entropy loss, and $\frac{\lambda}{2}  || \bm{w} ||^2$ is the weight-decay regularization.

\subsection{Memorization Can Exacerbate Spurious Correlations}
\label{sec:exacerbate}
Spurious correlations violate the i.i.d. assumption, and when combined with memorization, can hurt generalization. We now study such scenario. Adapting the frameworks introduced in \cite{sagawa2020investigation} and \cite{puli2023don}, we look into the interplay between memorization and spurious correlations in an interpretable setup.

\begin{setup}[Spurious correlations and memorization]
\label{setup:classification_w_memorization}
Consider a binary classification problem with labels \(y \in \{-1, +1\}\) and an unknown spurious attribute \(a \in \{-1, +1\}\). Each input \(\bm{x} \in \mathbb{R}^{d+2}\) is given by \(\bm{x} = (x_y, \gamma x_a, \bm{\epsilon})\), where \(x_y \in \mathbb{R}\) is a core feature dependent only on \(y\). \(x_a \in \mathbb{R}\) is a spurious feature dependent only on \(a\), and \(\bm{\epsilon} \in \mathbb{R}^d\) are example-specific features Gaussian noise, uncorrelated with both \(y\) and \(a\). The scalar \(\gamma \in \mathbb{R}\) modulates the rate at which the model learns to rely on the spurious feature \(x_a\), effectively acting as a scaling factor that increases the feature's learning rate relative to the core feature \(x_y\). The attribute \(a\) is considered spurious; it is assumed to be correlated with the labels \(y\) at training but has no correlation with \(y\) at test time, potentially leading to poor generalization if the model relies on \(x_a\). Specifically, the data generation process is defined as:
\begin{equation}
    \bm{x} = \left[
        \begin{aligned}
            x_y \sim \mathcal{N}(y, \sigma_y^2) \\
            \gamma x_a \sim \mathcal{N}(a, \sigma_a^2)  \\
            \bm{\epsilon} \sim \mathcal{N}(0, \sigma^2_{\bm{\epsilon}} \bm{I})
        \end{aligned} \right] \in \mathbb{R}^{d+2} \ \ \text{where,} \ 
    a = 
    \begin{cases} 
        y & \text{w.p. } \rho \\
        -y & \text{w.p. } 1 - \rho
    \end{cases} \ \text{and} \
    \rho = 
    \begin{cases} 
        \rho^{\text{tr}} & \text{(train)} \\
        0.5 & \text{(test)}
    \end{cases}.
\end{equation}

\end{setup}

To better understand this setup, one can think of a classification task between cows and camels. In this example, $\bm{x}$ represents the pixel data, $y \in \{\text{cow, camel}\}$ are the class labels, and $a \in \{\text{grass, sand}\}$ are the background labels. Here, $x_y$ represents the pixels associated with the animal itself (either cow or camel), $x_a$ represents the pixels associated with the background (grass or sand), and $\bm{\epsilon}$ represents irrelevant pixels that varies from one example to another. The \textbf{key assumption} is that the joint distribution of class labels and attribute labels differs between training and test datasets, i.e., $p^{\mathrm{tr}}(a, y) \neq p^{\mathrm{te}}(a, y)$. For example, in the training set, most cows (camels) might appear on grass (sand), while in the test set, cows (camels) appear equally on each background.

\paragraph{Illustrative Scenarios.}

We first empirically study a configuration of the above setup where $\rho^{\text{tr}}=0.9$ makes the $a$ spuriously correlated with $y$. We set \( \gamma = 5 \) making the spurious feature easier for the model to learn. In contrast, the core feature \( x_y \) is fully correlated with \( y \), but due to a smaller norm, it is learned more slowly. Here we consider two cases:
\begin{figure}[ht!]
    \centering
    \includegraphics[width=1.0\linewidth]{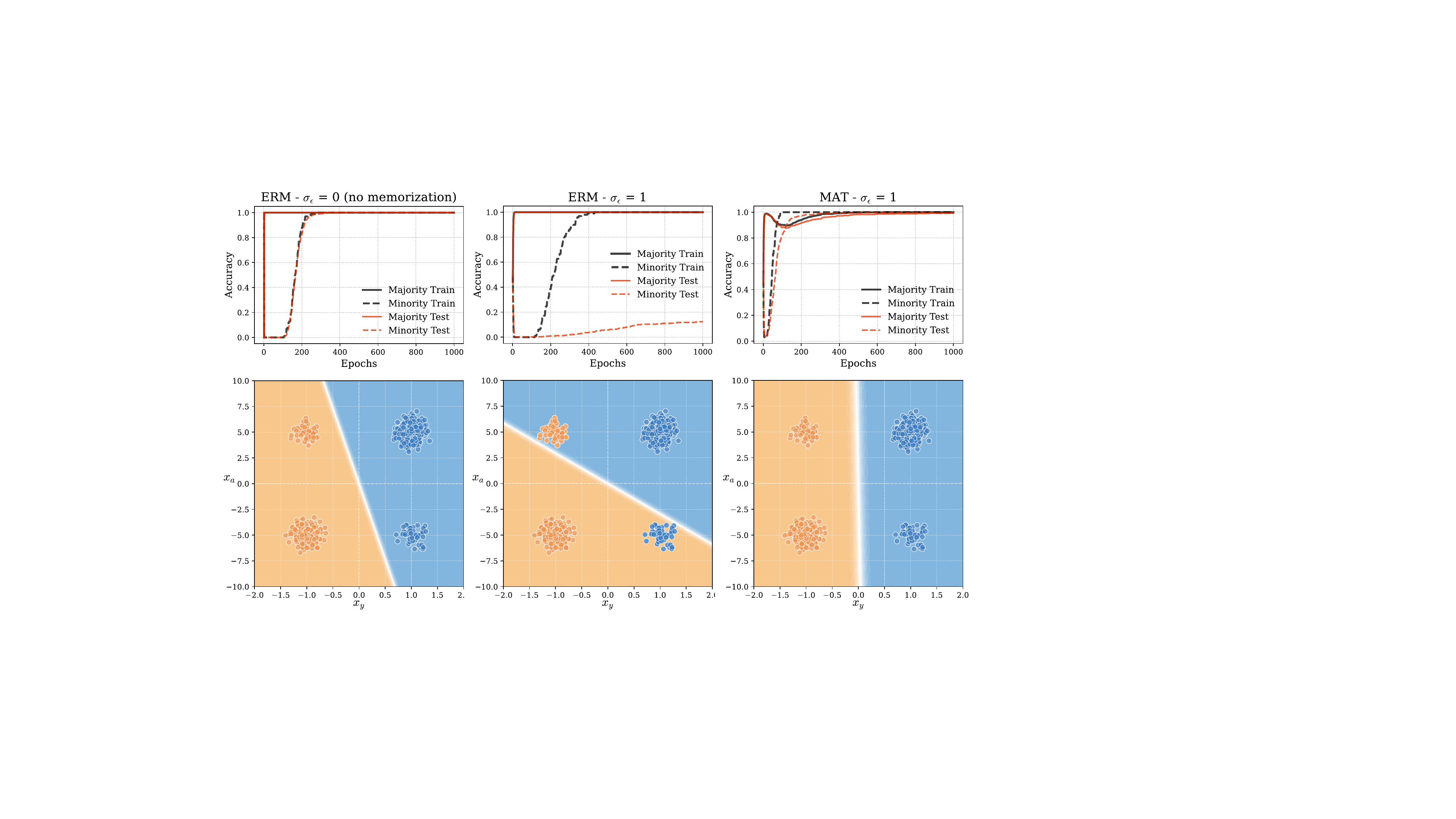}
    \caption{
        Illustration of two scenarios in the interpretable classification setup involving spurious correlations and memorization. The left panel represents a scenario without example-specific features (\(\sigma_{\bm{\epsilon}} \rightarrow 0\)), where memorization is not possible. In this case, the model trained with ERM initially learns the spurious feature \(x_a\) serving the majority, but eventually adjusts the decision boundary to the core feature \(x_y\), resulting in good generalization on minority test examples. The middle and right panels depict a scenario with example-specific features (\(\sigma_{\bm{\epsilon}} > 0\)), where memorization is possible. In the middle plot, the model trained with ERM fails to generalize as it memorizes the minorities using the example-specific features \(\bm{\epsilon}\) leaving no more incentive for the model to learn the core feature. In contrast, the model trained with MAT successfully learns the invariant features, and generalizes well even in the presence of example-specific features.}
    \label{fig:toy_classification}
    \vspace{-1.0em}
\end{figure}

\begin{enumerate}
    \item \textit{\textbf{No example-specific features \( \Rightarrow \) No Memorization \( \Rightarrow \) ERM generalizes well.}}
    Figure \ref{fig:toy_classification}-(left) presents a case where there are no input example-specific features (\( \sigma_{\bm{\epsilon}} \to 0 \)). As training progresses, the model first learns \( x_a \) due to its larger norm, resulting in perfect accuracy on the majority examples. Once the model achieve nearly perfect accuracy on the majority examples, it starts to learn the minority examples. At this point, the model must adjust its decision boundary to place more emphasis on the core feature \( x_y \), ultimately achieving perfect generalization on both majority and minority examples.

    \item \textit{\textbf{With example-specific features \( \Rightarrow \) Spurious Features + Memorization \( \Rightarrow \) ERM fails to generalize.}}\\
    Figure \ref{fig:toy_classification}-(middle) presents a similar setup to the former, but this time with input example-specific features (\( \sigma_{\bm{\epsilon}} > 0 \)). Again, initially, the model learns to rely on the spurious feature \( x_a \). However, unlike Case 1, the example-specific features \( \bm{\epsilon} \) provides the model an opportunity to memorize minority examples directly. As a result, the model achieves zero training loss by memorizing minority examples using the example-specific dimensions instead of learning to rely on the core feature \( x_y \). Consequently, the model fails to adjust its decision boundary to align with \( x_y \), and does not generalize on held-out minority examples.
\end{enumerate}
These results illustrate that the combination of spurious correlations and memorization creates a `loophole' for the model. When memorization happens, it leaves \textbf{no more incentives} for the model to learn the true, underlying patterns necessary for robust generalization.

\paragraph{Theoretical Analysis.} We now provide a formal analysis to formalize our empirical observations. Complete proofs are provided in Appendix \ref{sec:proofs}.

\begin{theorem}
\label{thm:memorization}
Consider a binary classification problem under the setup described in Setup~\ref{setup:classification_w_memorization}, where a linear model $f(\bm{x}; \bm{w}) = \bm{x}^\top \bm{w}$ is trained using ERM on a dataset $\mathcal{D}^{\text{tr}}$. Let $\widehat{\bm{w}}_{\text{ERM}} = (\widehat{w}_y, \widehat{w}_a, \widehat{\bm{w}}_{\bm{\epsilon}}) \in \mathbb{R}^{d+2}$ be the learned parameters, and $\widehat{y}(\bm{x}) = \mathrm{argmax}_y \ f_y(\bm{x}; \widehat{\bm{w}}_{\text{ERM}})$ the learned classifier.

Under the conditions where,
$
\lambda \to 0^+, \ n \to \infty, \ \lambda \sqrt{n} \to \infty, \ \rho^{\text{tr}} > 0.501,
$ we have,
\begin{enumerate}
\setcounter{enumi}{-1}
    \item \label{thm:perfect_training} \textbf{Perfect training accuracy:} The classifier achieves perfect accuracy on all training examples:
    \[
        p \left(\widehat{y}(\bm{x}) = y \right) \to 1, \quad \forall \bm{x} \in \mathcal{D}^{\text{tr}}.
    \]
    
    \item \label{thm:noiseless_input} \textbf{No example-specific features:} If $\sigma_{\bm{\epsilon}} \to 0^+$, the classifier converges to one that relies solely on the core feature $x_y$. For a test point $\bm{x}$:
    \[
        p \left(\widehat{y}(\bm{x}) = y \right) \to 1, \quad \forall \bm{x} \notin \mathcal{D}^{\text{tr}}.
    \]

    \item \label{thm:noisy_input} \textbf{With example-specific features:} If $\sigma_{\bm{\epsilon}} > 0$  is bounded away from zero and $d \gg \log n$ and $\gamma \gg \sigma_{\bm{\epsilon}} \sqrt{d/m}$, where $m := \rho^{\text{tr}} n$ is the number of majority samples in the training set. Then, for a test point $\bm{x}$, the classifier relies pathologically on the spurious feature $x_a$, i.e.,
    \[
        p \left(\widehat{y}(\bm{x}) = a \right) \to 1, \quad \forall \bm{x} \notin \mathcal{D}^{\text{tr}}.
    \]
\end{enumerate}

The condition $d \gg \log n$ ensures that example-specific features from different samples are approximately orthogonal, and $\gamma \gg \sigma_{\bm{\epsilon}} \sqrt{d/m}$ guarantees that the spurious feature $x_a$ is learned faster by gradient descent than other features.
\end{theorem}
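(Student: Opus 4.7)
The plan is to characterise $\widehat{\bm{w}}_{\text{ERM}}$ through its stationarity equation
\[
\widehat{\bm{w}} \;=\; \frac{1}{n\lambda}\sum_{i=1}^{n} y_i\,\bm{x}_i\,\sigma\bigl(-y_i\,\bm{x}_i^{\!\top}\widehat{\bm{w}}\bigr),
\]
and to decompose $\widehat{\bm{w}}=(\widehat{w}_y,\widehat{w}_a,\widehat{\bm{w}}_{\bm\epsilon})$ into its core, spurious, and example-specific blocks, then track the relative scale of each in the three asymptotic regimes. The key technical ingredient is an orthogonality estimate for the noise vectors: whenever $d\gg\log n$, a Hanson--Wright/Chernoff argument yields $\|\bm{\epsilon}_i\|^2 = d\sigma_{\bm\epsilon}^2(1+o(1))$ and $|\bm{\epsilon}_i^{\!\top}\bm{\epsilon}_j|=O(\sigma_{\bm\epsilon}^2\sqrt{d\log n})$ for $i\ne j$; paired with the law-of-large-numbers concentrations $\tfrac{1}{n}\sum_i y_i x_{y,i}\to 1$ and $\tfrac{1}{n}\sum_i y_i\gamma x_{a,i}\to \gamma(2\rho^{\text{tr}}-1)$, this reduces the $(d+2)$-dimensional problem to a finite-dimensional system of self-consistency equations for the group-wise sigmoid weights on majority and minority training points.

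For part~0 I would exhibit a feasible \emph{memorising} witness $\bm{w}^{\!*}=\sum_i\alpha_i y_i \bm{x}_i$ of small squared norm that attains arbitrarily large margin on every training sample, so that optimality of $\widehat{\bm{w}}_{\text{ERM}}$ (combined with the scaling $\lambda\sqrt{n}\to\infty$, which keeps the regulariser subdominant but still finite) forces the training margins $y_i\,\bm{x}_i^{\!\top}\widehat{\bm{w}}_{\text{ERM}}$ to diverge. When $\sigma_{\bm\epsilon}>0$, the construction loads the $\bm{\epsilon}_i$ dimensions at squared-norm cost $O(n/(d\sigma_{\bm\epsilon}^2))$; when $\sigma_{\bm\epsilon}\to 0$, the construction reduces to the $w_y$ direction and the concentration $x_{y,i}\approx y_i$ classifies a random training point sign-correctly with probability tending to~$1$.

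For part~1 the noise block vanishes, so the minority training points must be separated by $(w_y,w_a)$ alone. Expanding the stationarity equation group-wise and exploiting the $y\mapsto-y,\,a\mapsto-a$ symmetry of Setup~\ref{setup:classification_w_memorization}, the minority contribution to $\partial L/\partial w_a$ has the opposite sign of the majority contribution, which pins $\widehat{w}_a\gamma$ strictly below $\widehat{w}_y$ at the optimum. The test prediction then satisfies $f(\bm{x})\approx\widehat{w}_y\,x_y$ in probability, whose sign matches $y$, delivering the claimed generalisation.

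The main obstacle is part~2, which demands a fully self-consistent scale analysis of all three blocks. Here I would show that the self-term $y_i\,\widehat{\bm{w}}_{\bm\epsilon}^{\!\top}\bm{\epsilon}_i\approx \sigma(-y_i f_i)\cdot d\sigma_{\bm\epsilon}^2/(n\lambda)$ contributes a memorisation gain large enough to keep every minority margin positive \emph{before} $\widehat{w}_y$ has any opportunity to grow. Under this, the minority contribution to $\partial L/\partial w_a$ only partially cancels the majority contribution, so the spurious direction is charged at a rate proportional to $\gamma^2 m$, while the contribution to $\partial L/\partial w_y$ scales only as $m$; the condition $\gamma\gg\sigma_{\bm\epsilon}\sqrt{d/m}$ is precisely the threshold above which the former wins, yielding $\widehat{w}_a\gamma\gg\widehat{w}_y$. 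On a fresh test point the memorisation term has mean zero and concentrates to $o(1)$, so $f(\bm{x})\approx\widehat{w}_a\,\gamma x_a$, and its sign equals that of $a$ rather than $y$. The hard part will be closing the loop: the sigmoid weights, the $(\widehat{w}_y,\widehat{w}_a)$ pair, and the $\widehat{\bm{w}}_{\bm\epsilon}$ block are mutually coupled, and the argument must balance them in a single pass without collapsing to the naive `one step from zero' ridge approximation, which breaks down once $\lambda$ is driven below the signal scale.
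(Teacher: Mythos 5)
Your skeleton is the same as the paper's: you work from the stationarity equation (equivalently the dual representation $\widehat{\bm{w}}=\sum_i\alpha_i\bm{x}_i$), reduce to group-wise self-consistency equations for majority/minority sigmoid weights, use $d\gg\log n$ to make the noise Gram matrix effectively diagonal, and argue the $\bm{\epsilon}$-block contributes $o(1)$ at a fresh test point (the paper does this last step via $\|\alpha\|\le 1/(\lambda\sqrt n)$, which is exactly why the condition $\lambda\sqrt n\to\infty$ is imposed; you should make that dependence explicit rather than asserting concentration). But there is a genuine gap at the decisive step of part~2: you need the sign statement $e:=\gamma\widehat{w}_a-\widehat{w}_y>0$ \emph{at the regularized ERM optimum}, and what you offer is a dynamical heuristic ("minority margins are kept positive by the memorisation self-term before $\widehat w_y$ grows", "$w_a$ is charged at rate $\gamma^2 m$ versus $m$ for $w_y$") about gradient-descent trajectories, which is not the object being analyzed and which you yourself concede does not close the loop. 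The comparison is genuinely delicate because the minority dual weights are larger than the majority ones, so the two contributions to $e=(\gamma^2-1)(N_1+N_2)A-(\gamma^2+1)(N_3+N_4)E$ compete; the paper resolves it by proving $a=\gamma\widehat w_a+\widehat w_y\ge 0$ from the fixed-point equations together with the constraint $\eta A,\eta E\le 1/2$ (a consequence of correct training classification), and then \emph{importing} $e>0$ from Theorem~1 of \cite{puli2023don} under precisely the hypotheses $d\gg\log n$ and $\gamma\gg\sigma_{\bm{\epsilon}}\sqrt{d/m}$. Without either reproducing that result or an equivalent quantitative argument, your part~2 is an unproven claim, not a proof.

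A second, smaller problem is part~0. Your witness argument ("a small-norm memorising $\bm{w}^*$ with large margin forces the training margins of $\widehat{\bm{w}}_{\mathrm{ERM}}$ to diverge") does not go through under the stated scalings: to convert optimality of the average regularized loss into a per-example correctness statement you need the witness's \emph{total} loss below a constant, and a memorising witness built on the $\bm{\epsilon}_i$ costs $\|\bm{w}^*\|^2\asymp nc^2/(d\sigma_{\bm{\epsilon}}^2)$, so its ridge penalty times $n$ is of order $\lambda n^2/(d\sigma_{\bm{\epsilon}}^2)$, which is huge when only $d\gg\log n$ and $\lambda\gg n^{-1/2}$ are assumed. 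Moreover the conclusion "margins diverge" is inconsistent with the actual solution structure: in the fixed-point description the logits $v_i$ stay $O(1)$ and correctness comes from $\eta\alpha_i=s(-y_iv_i)\le 1/2$, not from margins blowing up. (The paper is admittedly terse on part~0 as well, but your proposed mechanism is the wrong one.) Your part~1 argument is essentially the paper's and is fine once part~0 is in place: zero training error on the minority in the noiseless limit pins $\gamma\widehat w_a\le\widehat w_y$, i.e.\ $e\le 0$, and the test claim follows.
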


Theorem \ref{thm:memorization} suggests that in the presence of spurious correlations and example-specific features, \( p(\widehat{y}(\bm{x}) = a) \to 1 \) for held-out examples. This indicates that the model confidently predicts \( a \) as the label for any held-out input \( \bm{x} \notin \mathcal{D}^{\text{tr}} \), leading to poor generalization. The probability that a model assigns to the held-out examples is then defined as:
\begin{equation}
    p(y^{\text{ho}} \mid \bm{x}) = \text{softmax}\left(\frac{f(\bm{x}; \widehat{\bm{w}}_{\text{ERM}})}{\tau}\right) \quad \forall \bm{x} \notin \mathcal{D}^{\text{tr}},
    \label{eq:held-out_softmax}
\end{equation}
where the superscript `ho' stands for held-out and \( \tau > 0 \) is a temperature parameter that controls the sharpness of the distribution. This probability serves as the basis for the shifts introduced in the next section.



\section{Memorization-Aware Training (MAT)}

As exemplified in the previous section, spurious correlations between labels \(y\) and certain attributes \(a\) can lead to poor generalization, particularly when models memorize specific training examples rather than learning robust patterns. To mitigate this issue, we propose \textit{memorization-aware training (MAT)}, which leverages calibrated held-out probabilities to shift the logits, \textbf{prioritizing learning of examples with worse generalization performance}.

\subsection{Shifting the Logits with Calibrated Held-Out Probabilities}

MAT modifies the empirical risk minimization (ERM) objective by introducing a per-example logit shift based on calibrated probabilities. The training loss is redefined as:
\begin{equation}
    \mathcal{L}^{\text{MAT}} = \frac{1}{n} \sum_{i=1}^n l \big ( \mathrm{softmax}(f(\bm{x}_i; \bm{w}) + \log \overline{p}^{ho}(\cdot \mid \bm{x}_i)), y_i \big),
\end{equation}
where \(f(\bm{x}_i; \bm{w})\) are the logits for input \(\bm{x}_i\), and \(\overline{p}^{ho}(\cdot \mid \bm{x}_i)\) are the calibrated held-out probabilities, defined as:
\begin{equation}
    \overline{p}^{ho}(y \mid \bm{x}) := \sum_{y^{ho}} p(y \mid y^{ho}) p(y^{ho} \mid \bm{x}),
\end{equation}
where \( p(y^{ho} \mid \bm{x}) \) is derived from Eq. \eqref{eq:held-out_softmax}, and \( p(y \mid y^{ho}) \) is the calibration matrix, given by:
\begin{equation}
    p(y \mid y^{ho}) = \frac{p(y, y^{ho})}{\sum_{y'} p(y', y^{ho})}, \quad \text{where} \quad p(y, y^{ho}) = \frac{1}{n} \sum_{\{i : y_i = y\}} p(y^{ho} \mid \bm{x}_i).
    \label{eq:calibration_matrix}
\end{equation}

\paragraph{Interpretation.} The calibration matrix \( p(y \mid y^{ho}) \) represents the empirical confusion matrix, capturing the observed relationship between true labels \( y \) and held-out probabilities. Combining this with the auxiliary model's held-out probabilities \( p(y^{ho} \mid \bm{x}) \), the calibrated probabilities \( \overline{p}^{ho}(y \mid \bm{x}) \) adjust the training focus, such that correct held-out predictions lower the loss for an example, while incorrect predictions increase it. This ensures MAT prioritizes minority or hard-to-classify examples with poor generalization. See Appendix \ref{sec:reweight_vs_shift} for a discussion on the effect of loss reweighting and logit shifting.

\paragraph{Estimating \(p(y^{ho} \mid \bm{x}_i)\).}

The final challenge in implementing MAT is that we require an auxiliary model to provide held-out probabilities for every single example. This auxiliary model can be derived using Cross-Risk Minimization (XRM)~\citep{xrm}.

XRM, originally proposed as an environment discovery method, trains two networks on random halves of the training data, encouraging each to learn a biased classifier. It uses cross-mistakes (errors made by one model on the other's data) to annotate training and validation examples. Here, MAT does not require environment annotations but uses held-out logits, \(f^{\text{ho}}(\bm{x})\), from a pretrained XRM model. For model selection using the validation set, either ground-truth or inferred environment annotations from XRM can be used.

The pseudo-code in Algorithm \ref{alg:mat} summarizes the MAT algorithm, detailing the steps to compute \( p(y^{ho} \mid \bm{x}) \) and incorporate calibrated probabilities into training. Note that MAT introduces a single hyper-parameter \(\tau\), the softmax temperature in Eq. \eqref{eq:held-out_softmax}.

\begin{algorithm}
  \caption{Memorization-Aware Training (MAT)}
  \label{alg:mat}
  \textbf{Input:} Training set $\{(\bm{x}_i, y_i)\}_{i=1}^n$, Validation set $\{(\bm{x}'_i, y'_i)\}_{i=1}^m$, Pre-trained XRM model $f^{\text{XRM}}(\bm{x})$\\
  \textbf{Optional:} Validation environment annotations $\{a'_i\}_{i=1}^m$; if not available, infer $\tilde{a}'_i = \text{argmax}\ f^{\text{ho}}(\bm{x}_i) \neq y_i$\\
  \textbf{Model Selection:} Early stopping based on validation worst-group accuracy (using $a'_i$ if provided, or $\tilde{a}'_i$ if inferred)

  \begin{itemize}[leftmargin=0.4cm,itemsep=0pt]
  \item Initialize a classifier $f(\bm{x})$.
  \item Compute $p(y^{\text{ho}} \mid \bm{x}_i) = \text{softmax}(f^{\text{XRM}}(\bm{x}_i) / \tau)$.
  \item Compute $p(y, y^{\text{ho}}) = \text{concat} \left( 
    \frac{1}{n} \sum p(y^{\text{ho}} \mid \bm{x}_i) [y = y_i] \right)_{y_i \in \{1, ..., K\}}$
  \item Compute $p(y \mid y^{\text{ho}}) = \frac{p(y, y^{\text{ho}})}{\langle p(y, y^{\text{ho}}), \bm{1}\rangle}$
  \item Compute $\overline{p}^{\text{ho}}(y \mid \bm{x}_i) = \langle p(y^{\text{ho}} \mid \bm{x}_i), p(y \mid y^{\text{ho}})^T \rangle$
  \item Repeat until early stopping:
    \begin{itemize}
        \item Update the loss: $\frac{1}{n} \sum l(\mathrm{softmax}(f(\bm{x}_i) + \log \overline{p}^{\text{ho}}(. \mid \bm{x}_i), y_i)$
        \item Track worst-group accuracy and update the best model
        \item Stop if no improvement is observed after $P$ iterations
    \end{itemize}
  \end{itemize}
\end{algorithm}


\section{Experiments}

We first validate the effectiveness of MAT in improving generalization under subpopulation shift. We then provide a detailed analysis of the memorization behaviors of models trained with ERM and MAT.

\subsection{Experiments on Subpopulation Shift}
We evaluate our approach on four datasets under subpopulation shift, as detailed in Appendix \ref{sec:exp_details}. In all experiments, we assume that training environment annotations are not available. For the validation set and for the purpose of model selection, we consider two settings: (1) group annotations are available in the validation set for model selection, and (2) no annotations are available even in the validation set.

For evaluation, we report two key metrics on the test set: (1) average test accuracy and (2) worst-group test accuracy, the latter being computed using ground-truth annotations.

\begin{table*}
\caption{Average and worst-group accuracies (avg/wga) comparing methods. We specify access to group annotations in training ($tr$) and validation ($va$) data. Symbol $\dagger$ denotes original numbers.}
\label{tab:mat_vs_others}
\begin{center}
\resizebox{0.99\textwidth}{!}{
\begin{NiceTabular}{lllllllllllll}
\CodeBefore
\rectanglecolor{papercolor!10}{2-5}{17-5}
\rectanglecolor{papercolor!10}{2-7}{17-7}
\rectanglecolor{papercolor!10}{2-9}{17-9}
\rectanglecolor{papercolor!10}{2-11}{17-11}
\Body
\toprule
\multicolumn{2}{l}{} &
\multicolumn{1}{l}{} &
\multicolumn{2}{c}{\textbf{Waterbirds}} &
\multicolumn{2}{c}{\textbf{CelebA}} &
\multicolumn{2}{c}{\textbf{MultiNLI}} &
\multicolumn{2}{c}{\textbf{CivilComments}} \\
\midrule
$tr$  &  $va$  &    &  Avg  &  WGA  &  Avg  &  WGA  &  Avg  &  WGA  &  Avg  &  WGA \\
\midrule
\cmark & \cmark &   GroupDRO                  &  90.2\scriptsize$\pm$ 0.3 & 86.5  \scriptsize$\pm$ 0.5 &  93.1 \scriptsize$\pm$ 0.3  & 88.3 \scriptsize$\pm$ 2.1  &  80.6 \scriptsize$\pm$ 0.4  & 73.4  \scriptsize$\pm$ 4.8 &  84.2 \scriptsize$\pm$ 0.2 & 73.8  \scriptsize$\pm$ 0.6 \\
\midrule\multirow{5}{*}{\xmark} &   \multirow{5}{*}{\cmark}
                &   ERM                       &  97.3           & 72.6            &  95.6           & 47.2            &  82.4           & 67.9            &  83.1           & 69.5          \\
       &        &   LFF$\dagger$              &  91.2           & 78.0            &  85.1           & 77.2            &  80.8           & 70.2            &  68.2           & 50.3          \\
       &        &   JTT$\dagger$              &  93.3           & 86.7            &  88.0           & 81.1            &  78.6           & 72.6            &  83.3           & 64.3          \\
       &        &   LC$\dagger$               &  -              & 90.5  \scriptsize$\pm$ 1.1 &  -              & 88.1  \scriptsize$\pm$ 0.8 &  -              &  -              &  -              & 70.3  \scriptsize$\pm$ 1.2 \\
       &        &   AFR$\dagger$              &  94.2 \scriptsize$\pm$ 1.2 & 90.4  \scriptsize$\pm$ 1.1 &  91.3 \scriptsize$\pm$ 0.3 & 82.0  \scriptsize$\pm$ 0.5 &  81.4 \scriptsize$\pm$ 0.2   & 73.4 \scriptsize$\pm$ 0.6 &  89.8 \scriptsize$\pm$ 0.6   & 68.7  \scriptsize$\pm$ 0.6 \\
       &        &   MAT                       &  90.4 \scriptsize$\pm$ 0.7 & 88.1  \scriptsize$\pm$ 0.9 &  92.4 \scriptsize$\pm$ 0.4 & 90.5  \scriptsize$\pm$ 1.0 &  79.4 \scriptsize$\pm$ 0.4 & 74.6  \scriptsize$\pm$ 1.0 &  84.3 \scriptsize$\pm$ 0.3 & 74.0  \scriptsize$\pm$ 0.8 \\
\midrule
\multirow{4}{*}{\xmark} &   \multirow{4}{*}{\xmark}
                &   ERM                      &  83.5           & 66.4            &  95.4           & 54.3            &  82.1           & 67.9            &  81.3           & 67.2          \\
       &        &   uLA$\dagger$             &  91.5 \scriptsize$\pm$ 0.7 & 86.1  \scriptsize$\pm$ 1.5 &  93.9 \scriptsize$\pm$ 0.2 & 86.5  \scriptsize$\pm$ 3.7 &  -              & -               &  -              & -              \\
       &        &   XRM$\dagger$    &  89.3 \scriptsize$\pm$ 0.6 & 88.1  \scriptsize$\pm$ 0.9 &  91.4 \scriptsize$\pm$ 0.5 & 89.1  \scriptsize$\pm$ 1.3 &  75.8 \scriptsize$\pm$ 1.2 & 72.1  \scriptsize$\pm$ 1.0 &  84.4 \scriptsize$\pm$ 0.6 & 72.2  \scriptsize$\pm$ 0.8 \\
       &        &   MAT                      &  90.4 \scriptsize$\pm$ 0.7 & 88.1  \scriptsize$\pm$ 0.9 &  92.3 \scriptsize$\pm$ 0.3 & 89.9  \scriptsize$\pm$ 1.2 &  79.6 \scriptsize$\pm$ 0.2 & 73.0  \scriptsize$\pm$ 0.8 &  85.7 \scriptsize$\pm$ 0.1 & 68.1  \scriptsize$\pm$ 0.7 \\
\bottomrule
\end{NiceTabular}%
}
\end{center}
\end{table*}

Table \ref{tab:mat_vs_others} compares the performance of MAT with several baseline methods, including ERM, GroupDRO \citep{sagawa2019distributionally}, and other invariant methods like LfF \citep{nam2020learning}, JTT \citep{liu2021just}, LC \citep{liu2022avoiding}, uLA \citep{tsirigotis2024group}, AFR \citep{qiu2023simple}, XRM+GroupDRO \citep{xrm}. These methods vary in their assumptions about access to annotations, both in training and validation for model selection. For instance, ERM does not assume any training group annotations, while GroupDRO has full access to group annotations for training and validation data. Further details on the experimental setup and methods are in Appendix \ref{sec:exp_details}.

\subsection{Analysis of Memorization Scores}
\label{sec:analysis_mem_score_main}

To understand the extent of memorization in models trained with ERM, we analyze the distribution of memorization scores across subpopulations. We focus on the Waterbirds dataset, which includes two main classes—Waterbird and Landbird —each divided into majority and minority subpopulations based on their background (e.g., Waterbird on water vs. Waterbird on land). 

The memorization score is derived from the influence function, which measures the effect of each training sample on a model's prediction. Formally, the influence of a training sample \(i\) on a target sample \(j\) under a training algorithm \(\mathcal{A}\) is defined as:
\begin{equation}
\operatorname{infl}(\mathcal{A}, \mathcal{D}, i, j) :=
\hat{p}^{(\mathcal{A})}_{\mathcal{D}}(y_j \mid \bm{x}_j) - \hat{p}^{(\mathcal{A})}_{\mathcal{D}_{\neg (\bm{x}_i, y_i)}}(y_j \mid \bm{x}_j)
\label{eq:influence_function}
\end{equation}
where \(\mathcal{D}\) is the training dataset, \( \mathcal{D}_{\neg (\bm{x}_i, y_i)} \) denotes the dataset with the sample \( (\bm{x}_i, y_i) \) removed. The memorization score is a specific case of this function where the target sample \( (\bm{x}_j, y_j) \) is the same as the training sample. It measures the difference between a model's performance on a training sample when that sample is included in the training set (held-in) versus when it is excluded (held-out).

Calculating self-influence scores using a naive leave-one-out approach is computationally expensive. However, recent methods, such as TRAK \citep{park2023trak}, provide an efficient alternative. TRAK approximates the data attribution matrix, and the diagonal of this matrix directly gives the self-influence scores (see Appendix \ref{sec:analysis_of_memorization_scores} for more details).

Figure \ref{fig:self_influences} depicts the distribution of self-influence scores across subpopulations in the Waterbird dataset. We note that minority subpopulations (e.g., Waterbirds on land) show higher self-influence scores compared to their majority counterparts (e.g., Waterbirds on water) in a model trained with ERM. However, a model trained with MAT shows a similar distribution of self-influence scores for both the majority and minority examples, with overall lower scores compared to ERM. These results show that MAT effectively reduced memorization, while leading to improved generalization.

\begin{figure}[h!]
    \begin{subfigure}[b]{1.0\linewidth}
        \begin{subfigure}[b]{0.32\linewidth}
            \centering
            \includegraphics[width=1.0\linewidth]{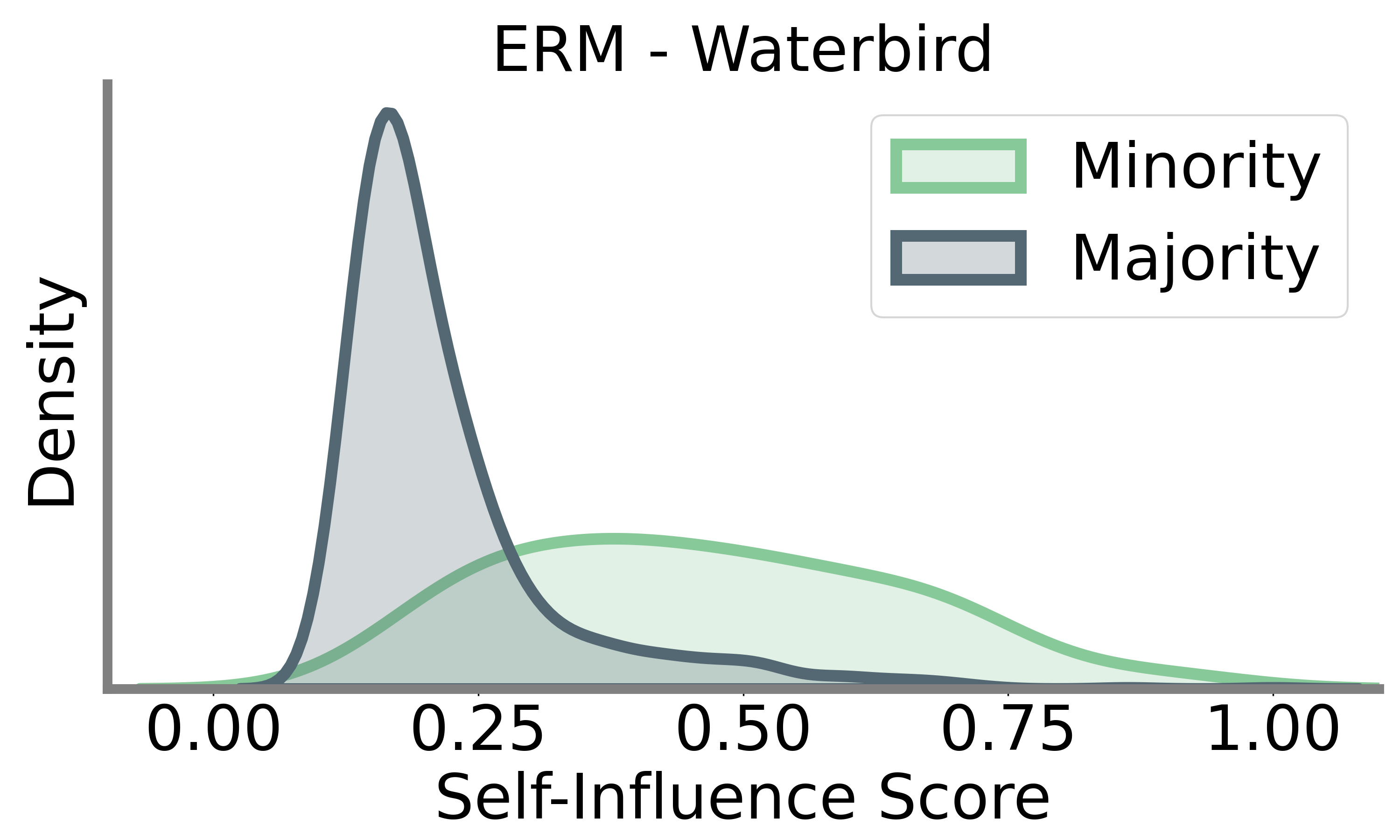}
        \end{subfigure}
        \hfill
        \begin{subfigure}[b]{0.32\linewidth}
            \centering
            \includegraphics[width=1.0\linewidth]{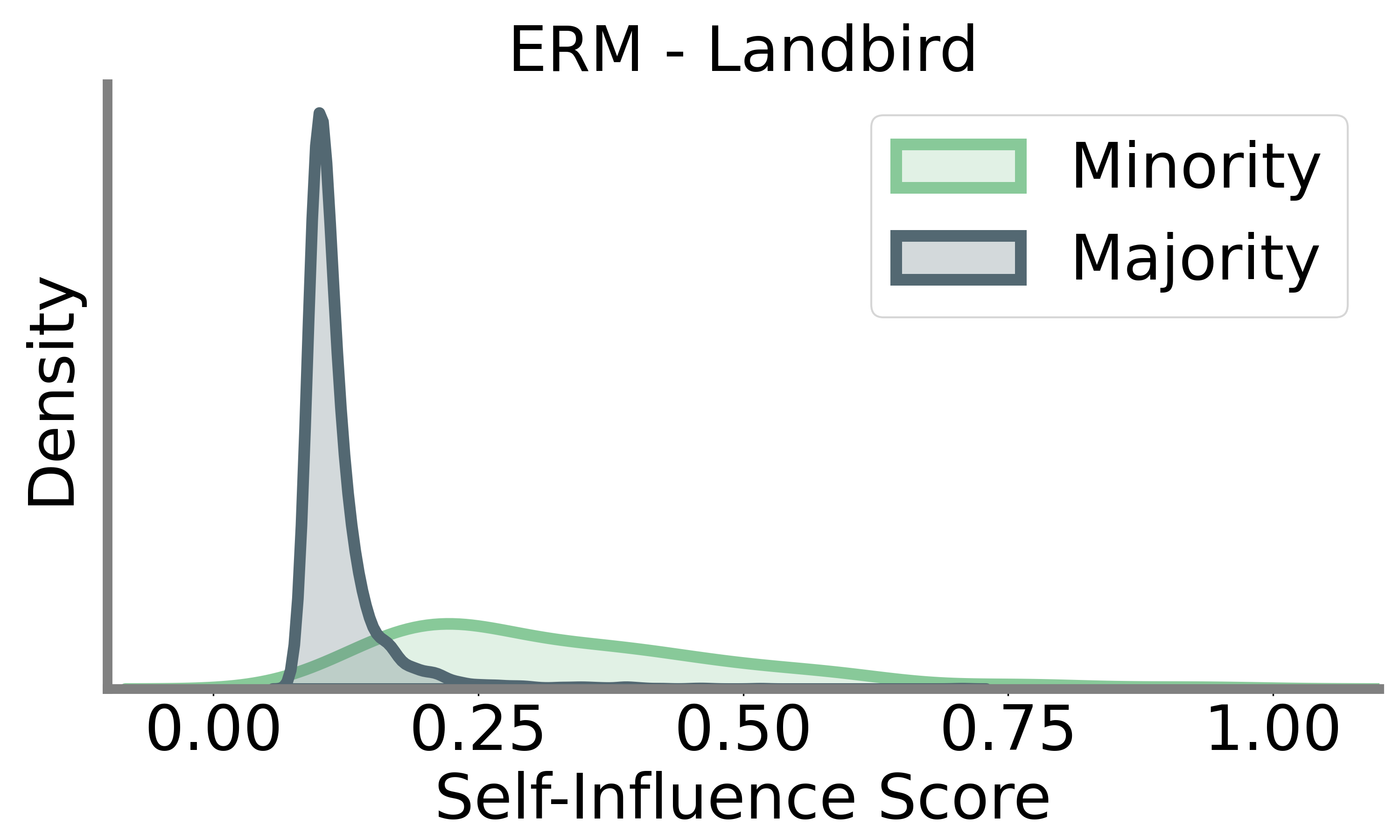}
        \end{subfigure}
        \hfill
        \begin{subfigure}[b]{0.32\linewidth}
            \centering
            \includegraphics[width=1.0\linewidth]{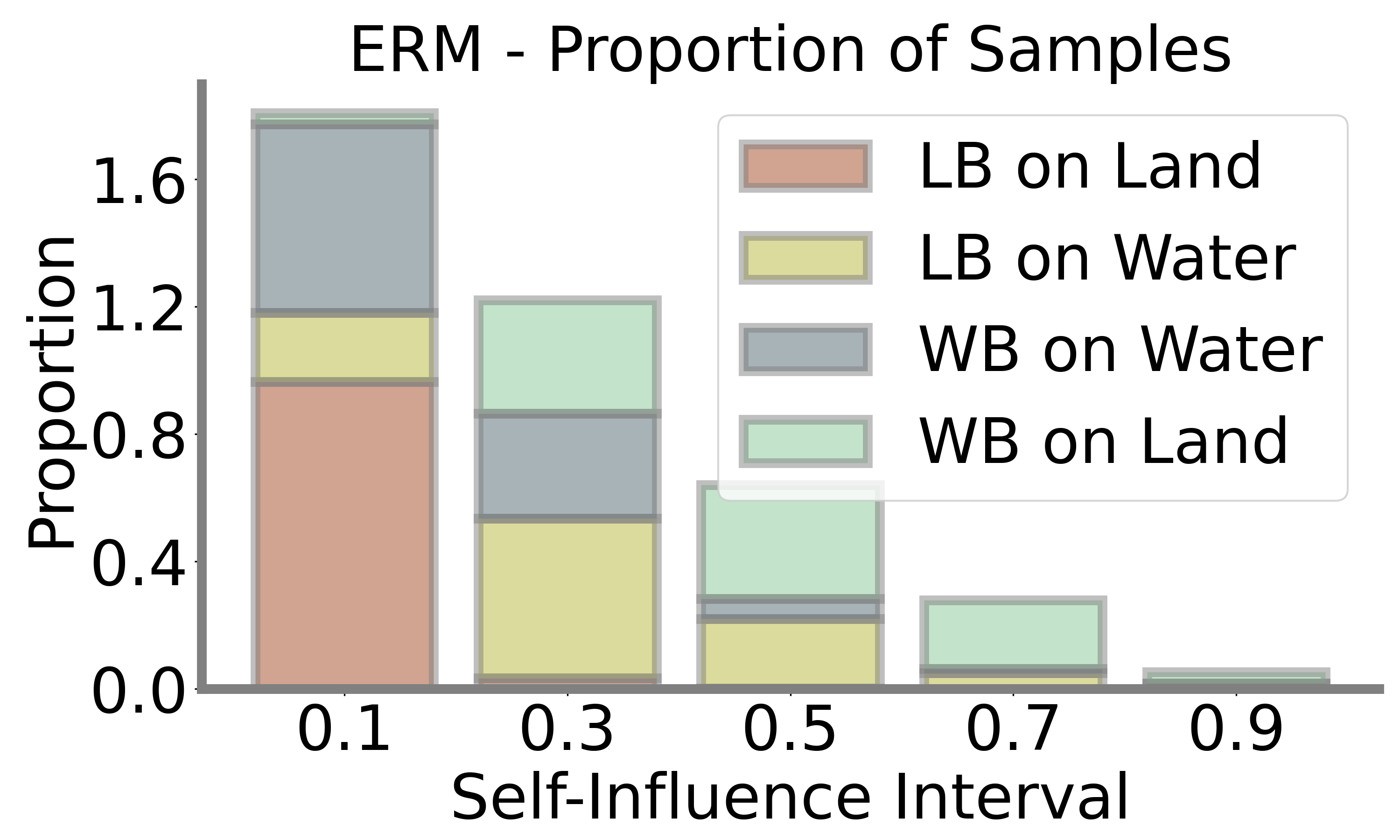}
    \end{subfigure}
    \label{fig:self_influences_erm}
    \end{subfigure}
    \hfill
    
    \begin{subfigure}[b]{1.0\linewidth}
        \begin{subfigure}[b]{0.32\linewidth}
            \centering
            \includegraphics[width=1.0\linewidth]{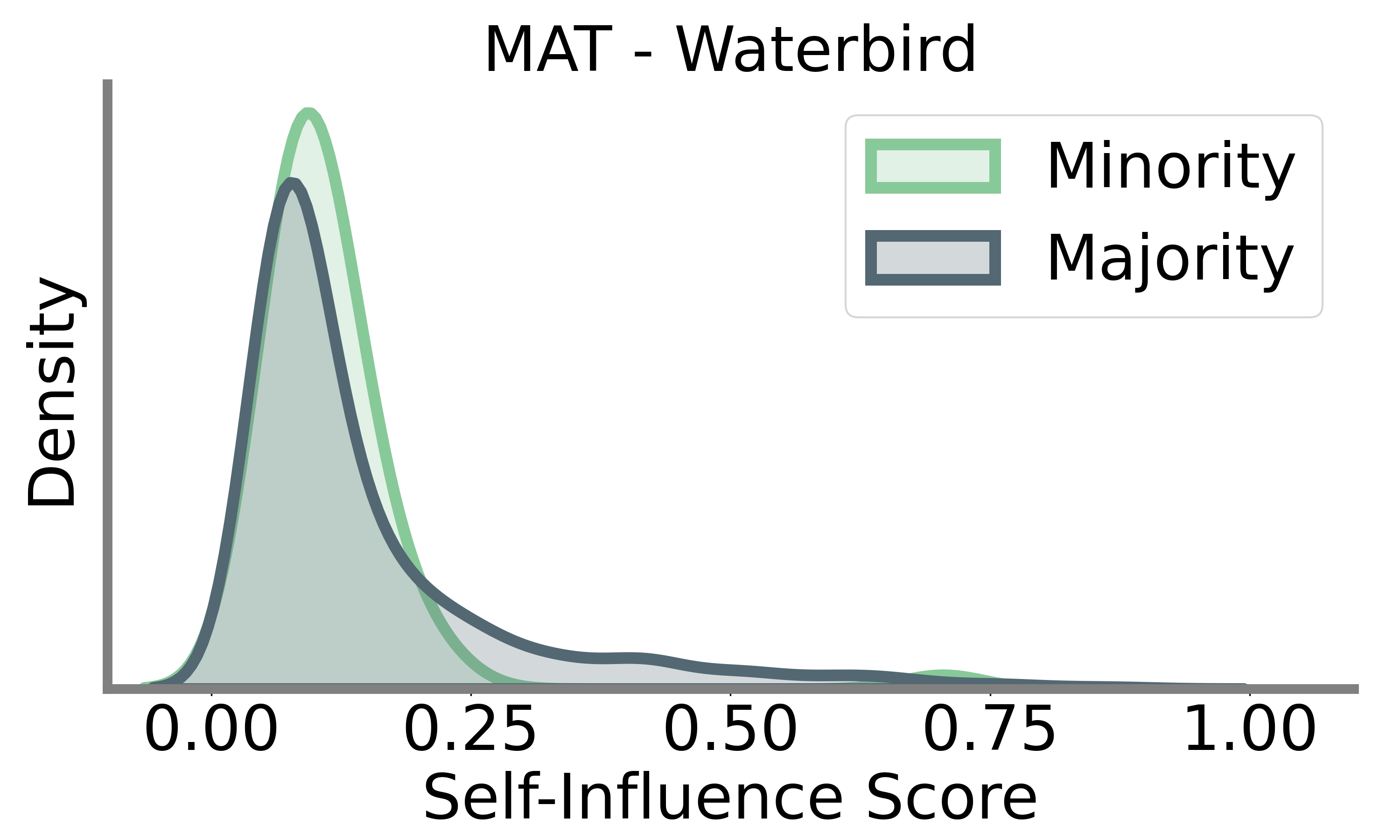}
        \end{subfigure}
        \hfill
        \begin{subfigure}[b]{0.32\linewidth}
            \centering
            \includegraphics[width=1.0\linewidth]{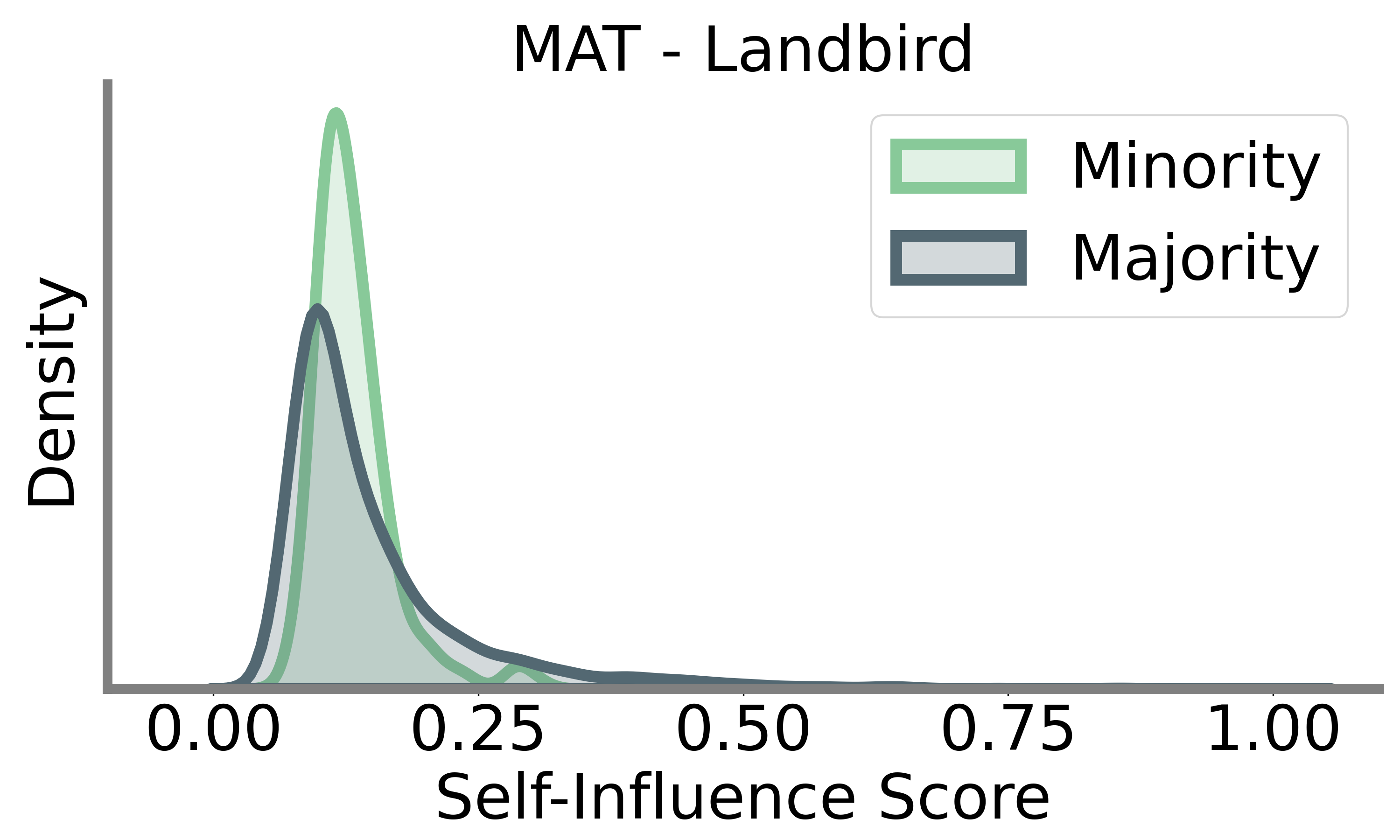}
        \end{subfigure}
        \hfill
        \begin{subfigure}[b]{0.32\linewidth}
            \centering
            \includegraphics[width=1.0\linewidth]{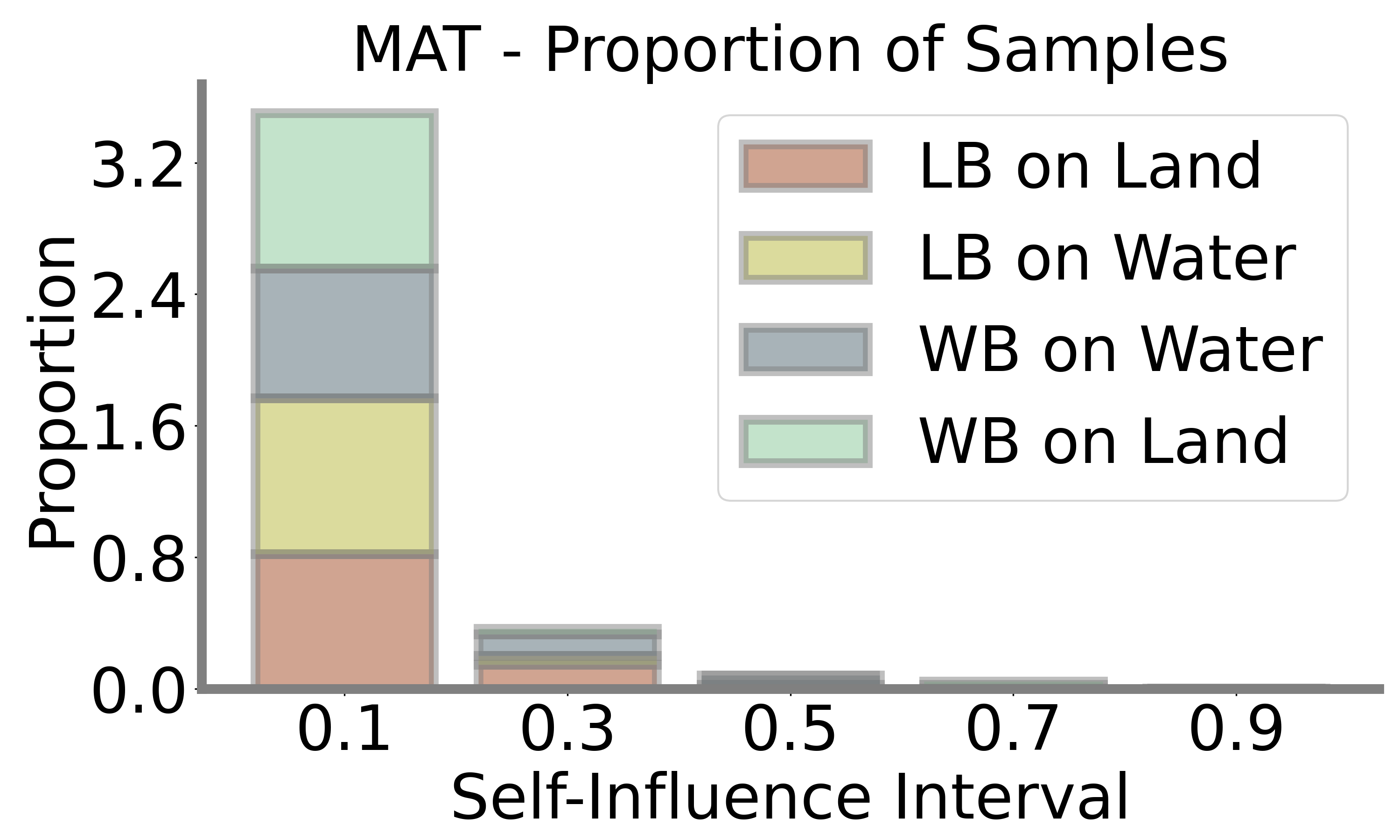}
    \end{subfigure}
    \label{fig:self_influences_mat}
    \end{subfigure}
    
    \caption{Self-Influence estimation of the Waterbird groups by ERM and MAT. The distribution of self-influence scores is shown for both the majority and minority subpopulations (e.g., Waterbirds on water vs. Waterbirds on land). Models trained with ERM exhibit higher self-influence scores for minority subpopulations, suggesting increased memorization in these groups. In contrast, models trained with MAT show more uniform self-influence distributions across both majority and minority subpopulations. The rightmost plots display the proportion of samples in different self-influence intervals, with MAT producing a more balanced distribution compared to ERM. Further details can be found in Appendix \ref{app:additional_influence_score_experiments}.}
\label{fig:self_influences}
\end{figure}


\section{Memorization: The Good, the Bad, and the Ugly}
\label{sec:memorization_good_bad_ugly}
In this work, we showed that the combination of memorization and spurious correlations, could be key reason for poor generalization. Neural networks can exploit spurious features and memorize exceptions to achieve zero training loss, thereby avoiding learning more generalizable patterns. However, an interesting and somewhat controversial question arises: \textit{Is memorization always bad?}

To explore this, we look into a simple regression task to understand different types of memorization and their effects on generalization. We argue that the impact of memorization on generalization can vary depending on the nature of the data and the model's learning dynamics, and we categorize these types of memorization into three distinct forms. The task is defined as follows,

\begin{figure}[ht!]
    \centering
    \includegraphics[width=.99\linewidth]{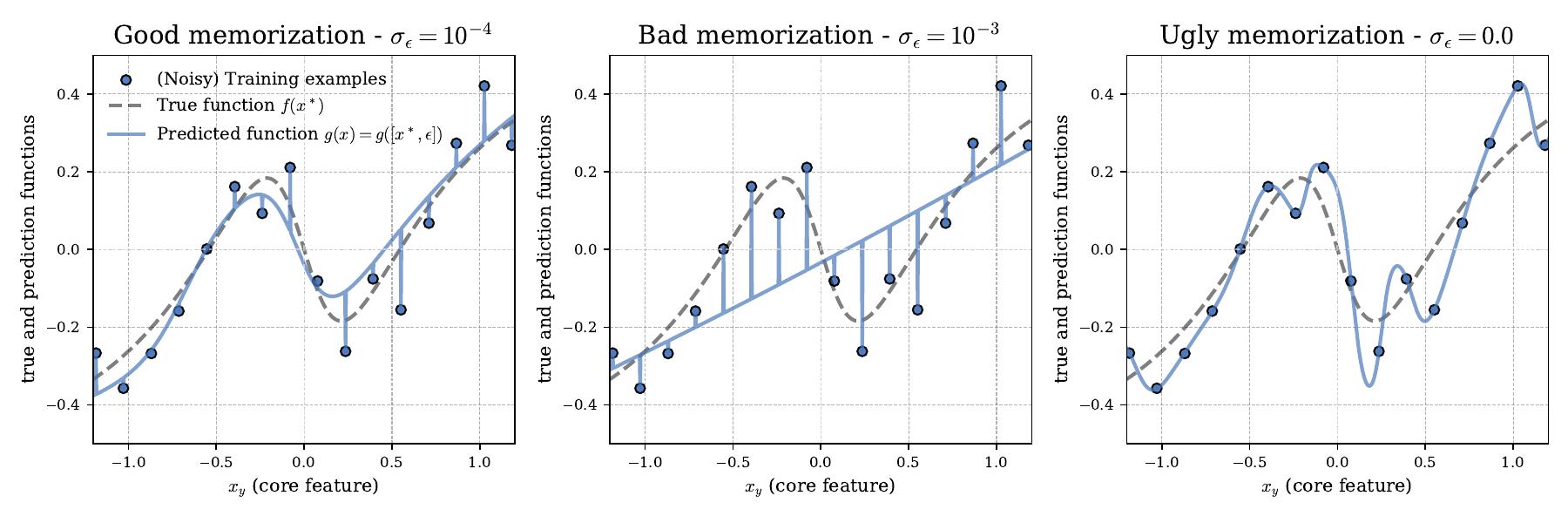}
    \caption{Three types of memorization in regression models trained with different levels of example-specific features (\(\sigma_{\bm{\epsilon}}\)). The plots show the ERM-trained model \(g(x) = g(x_y, \bm{\epsilon})\) (solid blue line) versus the true underlying function \(f(x_y)\) (dashed gray line) and the noisy training examples. In all the three, the models are trained until the training loss goes below $10^{-6}$. 
    \textbf{Good memorization (Left, \(\sigma_{\bm{\epsilon}} = 10^{-4}\))}: Model learns the true function \(f(x_y)\) well but slightly memorizes residual noise in the training data using the input example-specific features \(\bm{\epsilon}\). This type of memorization is benign, as it does not compromise generalization.
    \textbf{Bad memorization (Middle, \(\sigma_{\bm{\epsilon}} = 10^{-3}\))}: The model relies more on example-specific features than learning the true function \(f(x_y)\), leading to partial learning of \(f(x_y)\) and fitting of noise-dominated input features. This type of memorization impedes learning of generalizable patterns and is considered malign.
    \textbf{Ugly memorization (Right, \(\sigma_{\bm{\epsilon}} = 0.0\))}: Without example-specific features, the model overfits the training data, including label noise, resulting in a highly non-linear and complex model that fails to generalize to new data. This type is referred to as catastrophic overfitting.}
    \label{fig:good_bad_ugly}
\end{figure}

\begin{setup}
\label{ex:regression_w_memorization}
Let \( x_y \in \mathbb{R} \) be a scalar feature that determines the true target, \( y^* = f(x_y) \).
Let \( \mathcal{D} = \{(\bm{x}_i, y_i)\}_{i=1}^n \) be a dataset consisting of input-target pairs \( (\bm{x}, y) \).
Define the input vector as \( \bm{x} = \text{concat}(x_y, \bm{\epsilon}) \in \mathbb{R}^{d+1} \), where \( \bm{\epsilon} \sim \mathcal{N}(0, \sigma_{\bm{\epsilon}}^2I) \in \mathbb{R}^d \) represents input example-specific features concatenated with the true feature \( x_y \).
The target is defined as \( y = y^* + \xi \), where \( \xi \sim \mathcal{N}(0, \sigma_{\xi}) \) represents additive target noise.
\end{setup}

In this context, \( x_y \) can be interpreted as the core feature (e.g., the object in an object classification task), \( \bm{\epsilon} \) as irrelevant random example-specific features, and \( \xi \) as labeling noise or error. Now, consider training linear regression models \( \hat{y} = g(\bm{x}) \) on this dataset. Fixing \( \sigma_{\xi} \), we train three models under three different input example-specific features levels: \( \sigma_{\bm{\epsilon}} \in \{0, 10^{-4}, 10^{-3}\} \). The results, summarized in Figure \ref{fig:good_bad_ugly}, showcases three types of memorization:

\paragraph{The Good: when memorization benefits generalization.}
At an intermediate level of example-specific features, \( \sigma_{\bm{\epsilon}} = 10^{-4} \), the model effectively captures the true underlying function, \( f(x_y) \).
However, due to the label noise, the model cannot achieve a zero training loss solely by learning \( f(x_y) \).
As a result, it begins to memorize the residual noise in the training data by using the example-specific features \( \bm{\epsilon} \).
This is evidenced by sharp spikes at each training point, where the model, \( g(x) \), precisely predicts the noisy label if given the exact same input as during training. 
Nevertheless, for a neighboring test example with no example-specific features, the model's predictions align well with \( f(x_y) \), demonstrating good generalization.

This phenomenon is often referred to as ``benign overfitting'' where a model can perfectly fit (overfit in fact) the training data while relying on example-specific features, yet still generalize well to unseen data \citep{belkin2019reconciling,muthukumar2020harmless,bartlett2020benign}.
The key insight is that the overfitting in this case is ``benign'' because the model's memorization by relying on example-specific features does not compromise the underlying structure of the true signal.
Instead, the model retains a close approximation to the true function on test data, even though it memorizes specific noise in the training data.
This has been shown to occur particularly in over-parameterized neural networks \citep{belkin2019does,nakkiran2021deep}.

\paragraph{The Bad: when memorization prevents generalization.}
At a higher level of example-specific features, \( \sigma_{\bm{\epsilon}} = 10^{-3} \), the model increasingly rely on thesefeatures \( \bm{\epsilon} \) rather than fully learning the true underlying function \( f(x_y) \).
In this case, memorization is more tempting for the model because the example-specific features dominates the input, making it difficult to recover the true signal.
As a result, the model \( g(x) \) might achieve zero training loss by only partially learning \( f(x_y) \) and instead relying heavily on the example-specific features in the inputs to fit the remaining variance in the training data.

This is an instance of bad memorization as it hinders the learning of generalizable patterns, the case we studied in this work.
This phenomenon is referred to as "malign overfitting" in \cite{wald2022malign}, where a model fits the training data but in a way that compromises its ability to generalize, especially in situations where robustness, fairness, or invariance are critical.

It is important to note that both good and bad memorization stem from the same learning dynamics.
ERM, and the SGD that drives it, do not differentiate between the types of correlations or features they are learning.
Whether a features contributes to generalization or memorization is only revealed when the model is evaluated on held-out data.
If the features learned are generalizable, the model will perform well on new data; if they are not, the model will struggle, showing its reliance on memorized, non-generalizable patterns.

\paragraph{The Ugly: Catastrophic overfitting}
Finally, consider the case where there is no example-specific features, \( \sigma_{\bm{\epsilon}} = 0.0 \). In this case, the model may initially capture the true function \( f(x_y) \), but due to the presence of label noise, it cannot achieve zero training loss by learning only \( f(x_y) \).
Unlike the previous cases, the absence of example-specific features means the model has no additional features to leverage in explaining the residual error.
As a result, the model is forced to learn a highly non-linear and complex function of the input \( x = x_y \) to fit the noisy labels.

In this situation, memorization is ugly:
The model may achieve perfect predictions on the training data, but this comes at the cost of catastrophic overfitting—
where the model overfits so severely that it not only memorizes every detail of the training data, including noise, but also loses its ability to generalize to new data \citep{mallinar2022benign}. Early stopping generally can prevent this type of severe overfitting.


\section{Related Work}

\paragraph{Detecting Spurious Correlations.}
Early methods for detecting spurious correlations rely on human annotations~\citep{kim2019learning,sagawa2019distributionally,li2019repair}, which are costly and susceptible to bias. Without explicit annotations, detecting spurious correlations requires assumptions. A common assumption is that spurious correlations are learned more quickly or are simpler to learn than core features~\citep{geirhos2020shortcut,arjovsky2019invariant,sagawa2020investigation}. Based on this, methods like Just Train Twice (JTT) \citep{liu2021just}, Environment Inference for Invariant Learning (EIIL) \citep{creager2021environment}, Too-Good-To-Be-True Prior \citep{dagaev2023too}, and Correct-n-Contrast (CnC) \citep{zhang2022correct} train models with limited capacity to identify "hard" (minority) examples. Other methods such as Learning from Failure (LfF) \citep{nam2020learning} and Logit Correction (LC) \citep{liu2022avoiding} use generalized cross-entropy to bias classifiers toward spurious features. Closely related to this work is Cross-Risk Minimization (XRM) \cite{xrm}, where uses the held-out mistakes as a signal for the spurious correlations.

\paragraph{Mitigating Spurious Correlations.}
Reweighting, resampling, and retraining techniques are widely used to enhance minority group performance by adjusting weights or sampling rates~\citep{subg,nagarajan2020understanding,ren2018learning}. Methods like Deep Feature Reweighting (DFR) \citep{kirichenko2022last} and Selective Last-Layer Finetuning (SELF) \citep{labonte2024towards} retrain the last layer on balanced or selectively sampled data. Automatic Feature Reweighting (AFR)~\citep{qiu2023simple} extends these methods by automatically upweighting poorly predicted examples without needing explicit group labels. GroupDRO \citep{sagawa2019distributionally} minimizes worst-case group loss, while approaches like LfF and JTT increase loss weights for likely minority examples. Data balancing can also be achieved through data synthesis, feature augmentation, or domain mixing~\citep{hemmat2023feedback,yao2022improving,han2022umix}.

\textit{Logit adjustment} methods are another line of work for robust classification under imbalanced data. \citet{menon2020long} propose a method that corrects model predictions based on class frequencies, building on prior work in post-hoc adjustments~\citep{collell2016reviving,kim2020adjusting,kang2019decoupling}. Other methods, such as Label-Distribution-Aware Margin (LDAM) loss~\citep{cao2019learning}, Balanced Softmax \citep{ren2020balanced}, Logit Correction (LC) \citep{liu2022avoiding}, and Unsupervised Logit Adjustment (uLA) \citep{tsirigotis2024group}, adjust classifier margins to handle class or group imbalance effectively.

\paragraph{Memorization}
Memorization in neural networks has received significant attention since the work of \citet{zhang2021understanding}, which showed that these models can perfectly fit the training data, even with completely random labels. Several studies have studied the nuances of memorization across various scenarios \citep{zhang2019identity, feldman2020does, feldman2020neural, brown2021memorization, brown2022strong, anagnostidis2022curious, garg2023memorization, attias2024information}. \citet{feldman2020does, feldman2020neural} examined how memorization contributes to improved performance on the tail of the distribution, especially on visually similar training and test examples, and highlighted the role of memorizing outliers and mislabeled examples in preserving generalization, akin to our concept of "good memorization" terminology (\cref{sec:memorization_good_bad_ugly}).

\paragraph{Memorization and Spurious Correlations.}
Research has shown that memorization in neural networks can significantly affect model robustness and generalization. \citet{arpit2017closer, maini2022characterizing,stephenson2021geometry,maini2023can,krueger2017deep} explore memorization's impact on neural networks, examining aspects like loss sensitivity, curvature, and the layer where memorization occurs. \citet{yang2022understanding} investigate "rare spurious correlations," which are akin to example-specific features that models memorize. \citet{bombari2024spurious} provide a theoretical framework quantifying the memorization of spurious features, differentiating between model stability with respect to individual samples and alignment with spurious patterns. Finally, \citet{yang2024resmem} propose Residual-Memorization (ResMem), which combines neural networks with k-nearest neighbor-based regression to fit residuals, enhancing test performance across benchmarks.


\section{Conclusion}
In this work, we show that while spurious correlations are inherently problematic, they become particularly harmful when paired with memorization. This combination drives the training loss to zero too early, stopping learning before the model can capture more meaningful patterns. To address this, we propose Memorization-Aware Training (MAT), which leverages the negative effects of memorization to mitigate the influence of spurious correlations. Notably, we highlight that memorization is not always detrimental; its impact varies with the nature of the data. While MAT mitigates the negative effects of memorization in the presence of spurious correlations, there are cases where memorization can benefit generalization or even be essential \citep{feldman2020neural}. Future work could focus on distinguishing these scenarios and exploring the nuanced role of memorization in large language models (LLMs). Recent work \citep{carlini2022quantifying, schwarzschild2024rethinking, antoniades2024generalization} have highlighted the importance of defining and understanding memorization in LLMs, as it can inform how these models balance between storing training data and learning generalizable patterns.


\section*{Acknowledgements}
We thank Kartik Ahuja, Andrei Nicolicioiu, and Reyhane Askari Hemmat for their invaluable feedback and discussions. Reza Bayat acknowledges support from the Canada CIFAR AI Chair Program and the Canada Excellence Research Chairs (CERC) Program. Part of the early experiments were conducted using computational resources provided by Mila Quebec AI Institute. Finally, we thank the FAIR leadership for their support in publishing this work.

\clearpage
\newpage
\bibliographystyle{assets/plainnat}
\bibliography{paper}

\clearpage
\newpage
\beginappendix

\section{Experimental Details}
\label{sec:exp_details}
\subsection{Experiments on Subpopulation Shift}
For the results presented in \cref{tab:mat_vs_others}, we follow the experimental settings in \cite{xrm}. The hyperparameter search involves testing 16 random hyperparameter combinations sampled from the search space described in \cref{tab:hp_search_space}, using a single random seed. We select the hyperparameter combination and the early-stopping iteration that achieve the highest validation worst-group accuracy, either with ground truth group annotations or pseudo annotations, depending on the method, or the worst-class accuracy if groups are not available. Then, to calculate the results with mean and standard deviation, we repeat the best-chosen hyperparameter experiment 10 times with different random seeds. Finally, to ensure a fair comparison between different methods, we always report the test worst-group accuracy based on the ground-truth (human annotated) group annotations provided by each dataset.

We tested our method on 4 standard datasets: two image datasets, Waterbirds~\citep{sagawa2019distributionally} and CelebA~\citep{celeba}, and two natural language datasets, MultiNLI~\citep{williams2017broad} and CivilComments~\citep{borkan2019nuanced}. The configuration of each dataset is provided below. For CelebA, predictors map pixel intensities into a binary ``blonde/not-blonde'' label. No individual face characteristics, landmarks, keypoints, facial mapping, metadata, or any other information was used to train our CelebA predictors. We use a pre-trained ResNet-50~\citep{he2016deep} for image datasets. For text datasets, we use a pre-trained BERT~\citep{devlin2018bert}. We initialized the weights of the linear layer added on top of the pre-trained model with zero. All image datasets have the same pre-processing scheme, which involves resizing and center-cropping to $224 \times 224$ pixels without any data augmentation. We use SGD with momentum of $0.9$ for the Waterbirds dataset, and we employ AdamW~\citep{loshchilov2017decoupled} with default values of \(\beta_1 = 0.9\) and \(\beta_2 = 0.999\) for the other datasets.

\paragraph{Dataset.} The detailed statistics for all datasets are provided in Table~\ref{tab:dataset}, together with the descriptions of each task below.

\begin{itemize}[leftmargin=1cm]
    \item Waterbirds~\citep{sagawa2019distributionally}: The Waterbirds dataset is a combination of the Caltech-UCSD Birds 200 dataset \citep{wah2011caltech} and the Places dataset \citep{zhou2017places}. It consists of images where two types of birds (Waterbirds and Landbirds) are placed on either water or land backgrounds. The objective is to classify the type of bird as either ``Waterbird'' or ``Landbird'', with the background (water or land) introducing a spurious correlation.
    \item CelebA~\citep{celeba}: The CelebA dataset is a binary classification task where the objective is to classify hair as either ``Blond'' or ``Not-Blond'', with gender considered a spurious correlation.
    \item MultiNLI~\citep{williams2017broad}: The MultiNLI dataset is a natural language inference task in which the objective is to determine whether the second sentence in a given pair is ``entailed by'', ``neutral with'', or ``contradicts'' the first sentence. The spurious correlation is the presence of negation words.
    \item CivilComments~\citep{borkan2019nuanced}: The CivilComments dataset is a natural language inference task in which the objective is to classify whether a sentence is ``Toxic'' or ``Non-Toxic''.
\end{itemize}

\begin{table}[ht]
  \centering
  \small
  \caption{Summary of datasets used, including their data types, the number of classes, the number of groups, and the total dataset size for each.}
  \label{tab:dataset}
  \begin{tabular}{lcccc}
    \toprule
    \textbf{Dataset} & \textbf{Data type} & \textbf{Num. of classes} & \textbf{Num. of groups} & \textbf{Train size} \\
    \midrule
    Waterbirds & Image & 2 & 2 & 4795 \\
    CelebA & Image & 2 & 2 & 162770 \\
    MultiNLI & Text & 3 & 6 & 206175 \\
    CivilComments & Text & 2 & 8 & 269038 \\
    \bottomrule
  \end{tabular}
\end{table}

\clearpage

The baseline methods we compared our method with include ERM, GroupDRO~\citep{sagawa2019distributionally}, LfF \citep{nam2020learning}, JTT \citep{liu2021just}, LC \citep{liu2022avoiding}, uLA \citep{tsirigotis2024group}, AFR \citep{qiu2023simple}, XRM+GroupDRO \citep{xrm}. The results for ERM, GroupDRO are based on our own implementation, while the results for rest of the methods are adapted from the respective papers.

\begin{table}[h]
\centering
\caption{Hyperparameter search space. ERM and MAT share the same hyperparameter search space, except that MAT has one additional hyperparameter, \(\tau\), which is used in the softmax function as the temperature parameter to control the sharpness/smoothness of the output distribution. }
\label{tab:hp_search_space}
\begin{tabular}{llll}
\toprule
\textbf{algorithm} & \textbf{hyper-parameter} & \textbf{ResNet}  & \textbf{BERT}\\
\midrule
                   & learning rate & $10^{\text{Uniform}(-5, -3)}$ & $10^{\text{Uniform}(-6, -4)}$\\
ERM \& \method{}          & weight decay  & $10^{\text{Uniform}(-6, -3)}$ & $10^{\text{Uniform}(-6, -3)}$\\          & batch size    & $2^{\text{Uniform}(5, 7)}$    & $2^{\text{Uniform}(4, 6)}$\\
                   & dropout       & ---                           & $\text{Random}([0, 0.1, 0.5])$\\
\midrule
\method{} specific & $\tau$        & $\text{Random}([0.001, 0.01, 0.1])$ & $\text{Random}([0.001, 0.01, 0.1])$ \\
\midrule
GroupDRO           & $\eta$        & $10^{\text{Uniform}(-3, -1)}$ & $10^{\text{Uniform}(-3, -1)}$\\
\bottomrule
\end{tabular}
\end{table}

\paragraph{Methods.} We compared our method with a variety of baseline approaches presented in the following. 
\begin{itemize}[leftmargin=1cm]
    \item ERM: Empirical Risk Minimization (ERM) trains a model by minimizing the average loss over the entire training dataset.
    \item GroupDRO \citep{sagawa2019distributionally}: Group Distributionally Robust Optimization (GroupDRO) minimizes the worst-group loss across different predetermined groups in the training data using ground-truth group annotation.
    \item LfF \citep{nam2020learning}: Learning from Failure (LfF) trains two models, one biased and one debiased. The biased model is trained to amplify reliance on spurious correlations using generalized cross-entropy, while the debiased model focuses on samples where the biased model fails.
    \item JTT \citep{liu2021just}: Just Train Twice (JTT) is a two-stage method that first identifies misclassified examples with ERM, then upweights them in the second stage to improve performance on hard-to-learn groups.
    \item LC \citep{liu2022avoiding}: Logit Correction (LC) trains two models, one biased and one debiased. The biased model, trained with generalized cross-entropy, produces a correction term for the logits of the debiased model. Additionally, LC uses a Group MixUp strategy between minority and majority groups to further enrich the representation of the minority groups. 
    \item uLA \citep{tsirigotis2024group}: Unsupervised Logit Adjustment (uLA) uses a pretrained self-supervised model to generate biased predictions, which are then used to adjust the logits of another model, improving robustness without needing group information.
    \item AFR \citep{qiu2023simple}: Automatic Feature Reweighting (AFR) retrains the last layer of a model by up-weighting examples that the base model poorly predicted to reduce reliance on spurious features.
    \item XRM+GroupDRO \citep{xrm}: Cross-Risk Minimization (XRM) automatically discovers environments in a dataset by training twin networks on disjoint parts of the training data that learn from each other's errors. XRM employs a label-flipping strategy to amplify model biases and better identify spurious correlations. In the next stage, an invariant learning method like GroupDRO uses the discovered environments (i.e., groups) to improve the worst group performance.
\end{itemize}

\clearpage
\subsection{Analysis of Memorization Scores}
\label{sec:analysis_of_memorization_scores}

\begin{wrapfigure}{r}{0.32\textwidth}
    \vspace{-1.8em}
    \centering
    \includegraphics[width=\linewidth]{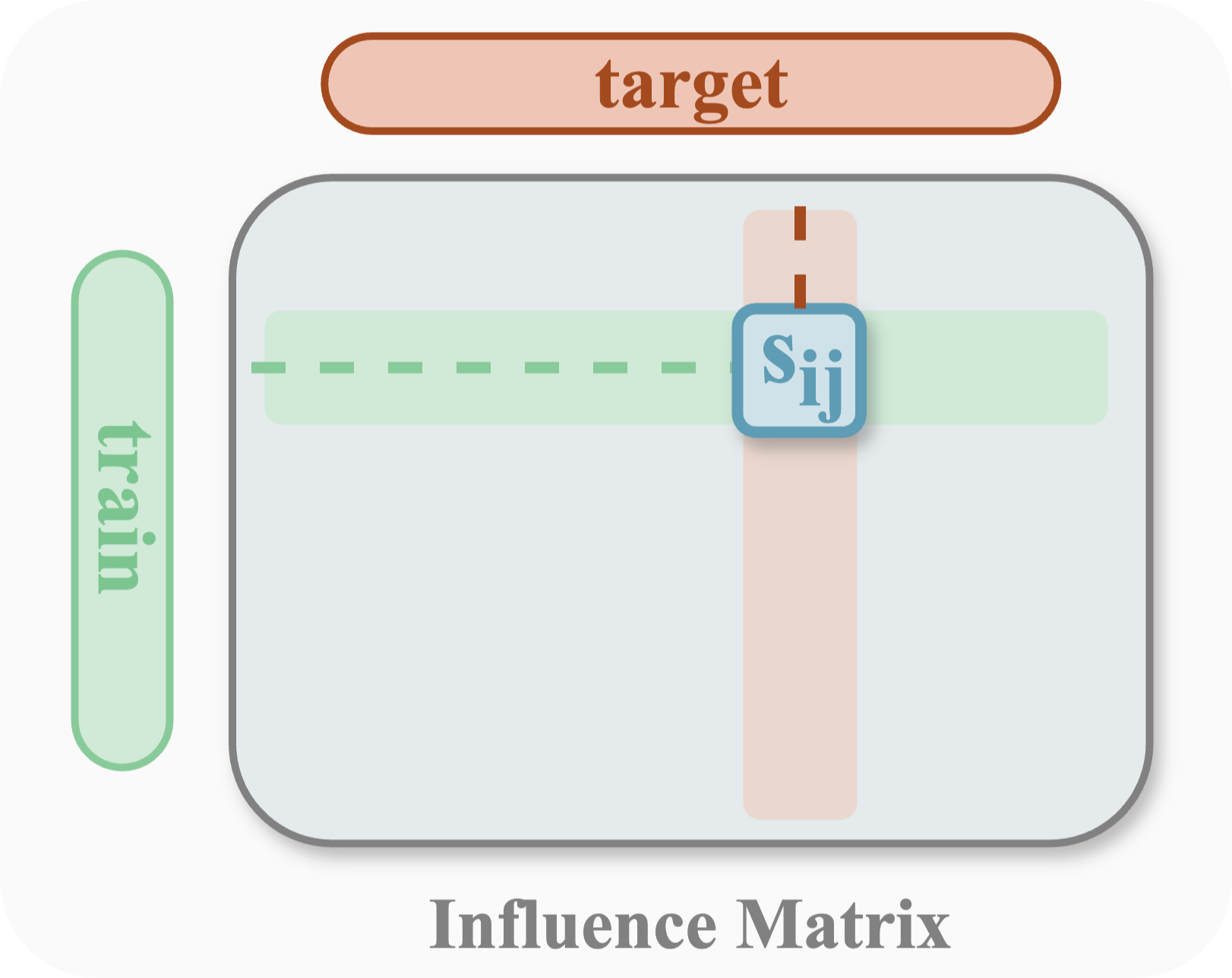}
    \caption{Influence Matrix: Influence scores of all the train and target data point pairs. Setting \(target = train\) and \(i = j\) reveals the self-influence.}
    \label{fig:trak_matrix}
    \vspace{-2.5em}
\end{wrapfigure}
For the results presented in~\cref{fig:self_influences}, we used the TRAK framework \citep{park2023trak} to calculate self-influence scores for individual training data points. The TRAK framework computes the influence of each training point on a target dataset, generating an influence matrix, as illustrated in~\cref{fig:trak_matrix}. By setting the target dataset as the training dataset itself, we compute pairwise influence scores for all training data points, where the diagonal of this matrix represents the self-influence of each point on its own prediction. We visualized the distribution of these scores for both the majority and minority groups of the Waterbird dataset.

To utilize the framework, we provided the necessary model checkpoints saved during training. These checkpoints enabled TRAK to evaluate the contributions of specific training examples to the model's predictions.

\section{Empirical Insights into the Effects of Reweighting and Logit-Shifting}
\label{sec:reweight_vs_shift}

In this section, we empirically illustrate how gradient descent dynamics in logistic regression are influenced by different \textit{weighting} and shifting schemes. ERM with uniform weights converges to the max-margin solution, aligning with the theoretical expectation \citep{soudry2018implicit}. Reweighting examples changes the optimization path but ultimately converges to the max-margin fixed point. In contrast, introducing a \textit{shift} that scales with the logits changes the optimization landscape, leading to a distinct fixed point and a different classifier. These results highlight how both reweighting and logit-scaling shifts impact the intermediate learning dynamics, but only shifts fundamentally alter the convergence behavior. Below we provide details on the dataset, model, and experimental configurations, with results shown in Figure \ref{fig:max_margin}.

\paragraph{Dataset.} 
The dataset consists of two-class synthetic data generated using Gaussian mixtures with overlapping clusters. We generate $n=100$ samples in $\mathbb{R}^2$ with each class drawn from distinct Gaussian distributions. The features are standardized to zero mean and unit variance using a standard scaler, and the labels are assigned as $y \in \{-1, 1\}$.

\paragraph{Model and Optimization.} 
The logistic regression model is parameterized as:
\[
f(x; w, b) = Xw + b,
\]
where $X \in \mathbb{R}^{n \times 2}$ is the input data, $w \in \mathbb{R}^2$ is the weight vector, and $b \in \mathbb{R}$ is the bias term. The gradient descent update is computed for the cross-entropy loss:
\[
\mathcal{L} = -\frac{1}{n} \sum_{i=1}^n w_i \log\left(\frac{1}{1 + \exp(-y_i f(x_i; w, b))}\right),
\]
where $w_i$ are per-example weights. We compare three configurations:
\begin{itemize}
    \item \textbf{ERM:} Uniform weights are applied to all examples, i.e., $w_i = 1$ for all $i$.
    \item \textbf{Reweight:} We reduce the weights of examples for half of the examples of one class (e.g., $y=1$) by setting $w_i = 0.1$ for these examples and $w_i = 1$ for the rest. This simulates scenarios where certain groups are more important during training, e.g. are minority.
    \item \textbf{Shift:} A margin-dependent shift is introduced by modifying the logits as:
    \[
    f(x; w, b) = Xw + b + \delta_i \cdot \|w\|,
    \]
    where $\delta_i = 2$ for the same half of class $y=1$ and $\delta_i = 0$ for the other. The scaling with the norm of the weight vector effectively adjusts the margin on each example.
\end{itemize}

To analyze the optimization dynamics, we plot the trajectories of the normalized weight vectors during training. Specifically, we plot:
$w / \|w\| \times \log(t),$
where $t$ is the iteration step, to ensure the trajectories are visually distinguishable since all points would otherwise lie on the unit circle.

As shown in Figure \ref{fig:max_margin}, ERM converges to the max-margin solution. Reweighting changes the optimization trajectory initially but still converges to the max-margin fixed point. In contrast, the logit-shifting changes the fixed point of the optimization, making logit-shifting a more robust method and alleviating the need for precise early stopping.

\begin{figure}[h]
\centering
\includegraphics[width=0.5\textwidth]{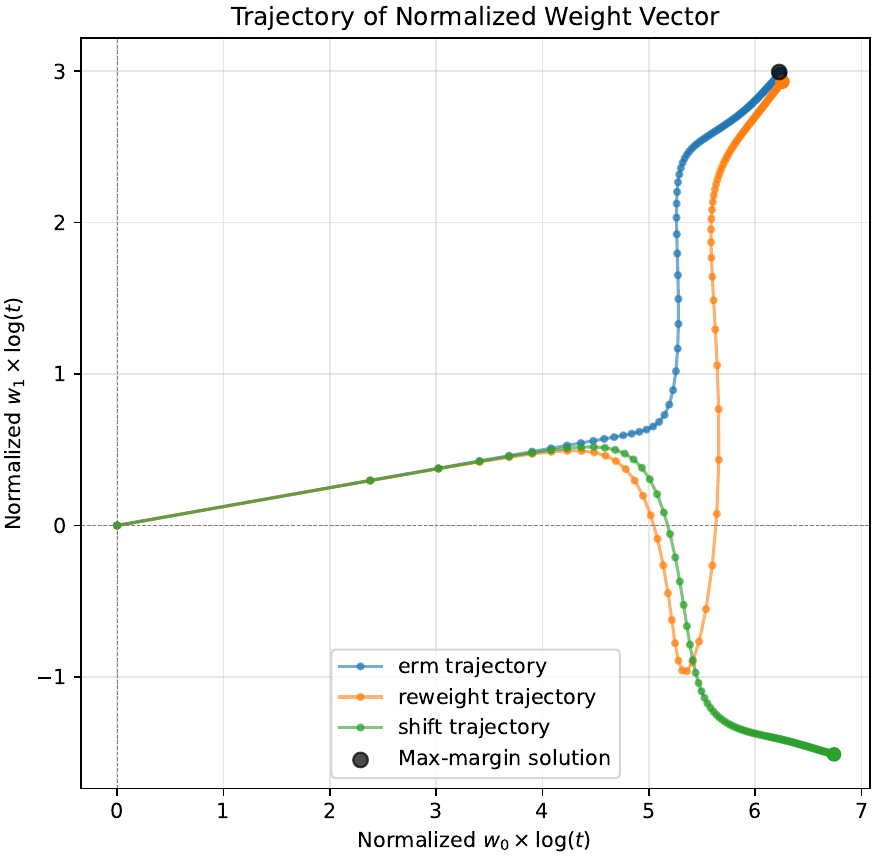}
\caption{Weight vector trajectories for ERM (blue), reweighting (orange), and shifting (green) compared with the max-margin solution (black dot). Weight vector trajectories normalized as $w / \|w\| \times \log(t),$ where $t$ is the iteration step. The scaling by $\log(t)$ is done for better visualization only.}
\label{fig:max_margin} \end{figure}


\section{Further Discussion on Experimental Results}
\label{sec:further_exp_discussion}
Subpopulation shift methods operate under three setups regarding access to group annotations, each progressively increasing the task’s difficulty.

The least restrictive scenario assumes full access to group annotations for both training and validation sets. This allows methods such as GroupDRO to leverage loss re-weighting with group-aware model selection and early stopping. While effective, this setup requires strong assumptions about the availability of group annotations. As noted in the original GroupDRO paper \citep{sagawa2020investigation}, achieving consistent performance in this setup often necessitates ``stronger-than-typical $L_2$ regularization or early stopping''. Without such measures, as shown in \citet{sagawa2020investigation} (Figure 2), GroupDRO’s behavior can revert to ERM-like performance due to the implicit bias of gradient descent toward the max-margin solution.

A slightly more restrictive scenario assumes access to group annotations only for the validation set, which are used for \textit{model selection} and \textit{early stopping}. However, most algorithms, including GroupDRO, still rely on explicit group annotations during training, which must often be inferred using methods such as XRM. In contrast, MAT does not depend on training group annotations, making it more robust in cases where such information is unavailable.

The most restrictive and realistic scenario assumes no access to group annotations at any stage, either for training or validation. In this context, methods like MAT, which use per-example logits from held-out predictions, become particularly interesting. MAT operates without explicit group-specific information, making it flexible and robust in scenarios with ambiguous or undefined group structures. However, for the purpose of model selection, MAT does rely on explicit validation group annotations being inferred.

MAT’s group-agnostic nature makes it particularly suitable for real-world scenarios where group definitions may be complex or unavailable. While GroupDRO relies on predefined or inferred group labels for its reweighting strategy, MAT uses per-example and soft adjustments without an explicit notion of group.


\section{Additional Influence-Score Experiments}
\label{app:additional_influence_score_experiments}
In this section, we extend our experiments on influence-score analysis (refer to \ref{sec:analysis_mem_score_main}) across three methods: ERM, GroupDRO, and MAT. Figure \ref{fig:self_influences_extended} demonstrates that MAT effectively reduces self-influence scores relative to the other methods. Additionally, GroupDRO also shows some level reduction in the self-influence score of groups, specifically for the minority groups. This further indicates that memorization is limiting generalization, and mitigating it contributes to improved generalization in subpopulation shift settings.

\begin{figure}[h!]
    \begin{subfigure}[b]{1.0\linewidth}
        \begin{subfigure}[b]{0.32\linewidth}
            \centering
            \includegraphics[width=1.0\linewidth]{figures/self-influence/distribution/ERM_Waterbird.png}
        \end{subfigure}
        \hfill
        \begin{subfigure}[b]{0.32\linewidth}
            \centering
            \includegraphics[width=1.0\linewidth]{figures/self-influence/distribution/ERM_Landbird.png}
        \end{subfigure}
        \hfill
        \begin{subfigure}[b]{0.32\linewidth}
            \centering
            \includegraphics[width=1.0\linewidth]{figures/self-influence/proportion/ERM.png}
    \end{subfigure}
    \label{fig:self_influences_erm_app}
    \end{subfigure}
    \hfill

    \begin{subfigure}[b]{1.0\linewidth}
        \begin{subfigure}[b]{0.32\linewidth}
            \centering
            \includegraphics[width=1.0\linewidth]{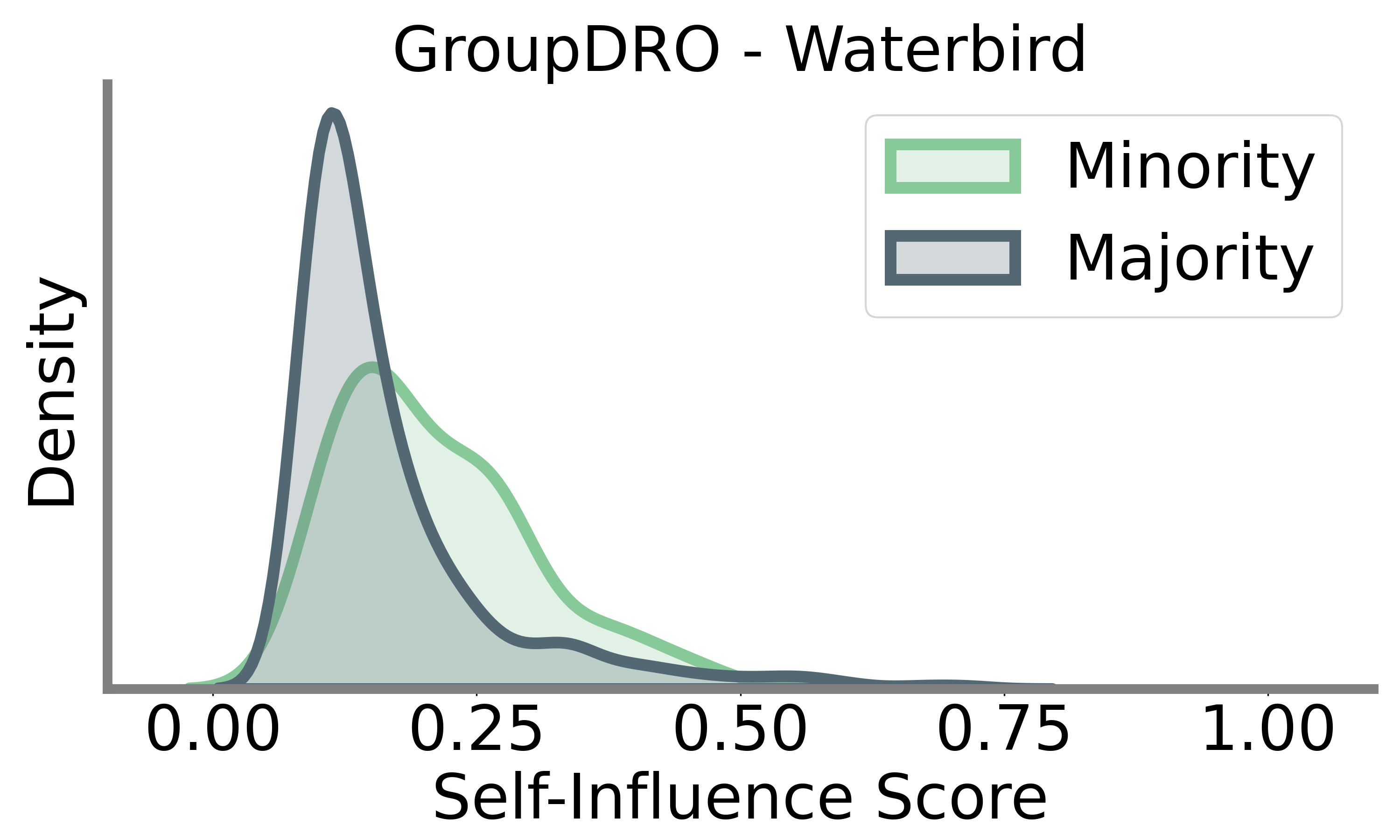}
        \end{subfigure}
        \hfill
        \begin{subfigure}[b]{0.32\linewidth}
            \centering
            \includegraphics[width=1.0\linewidth]{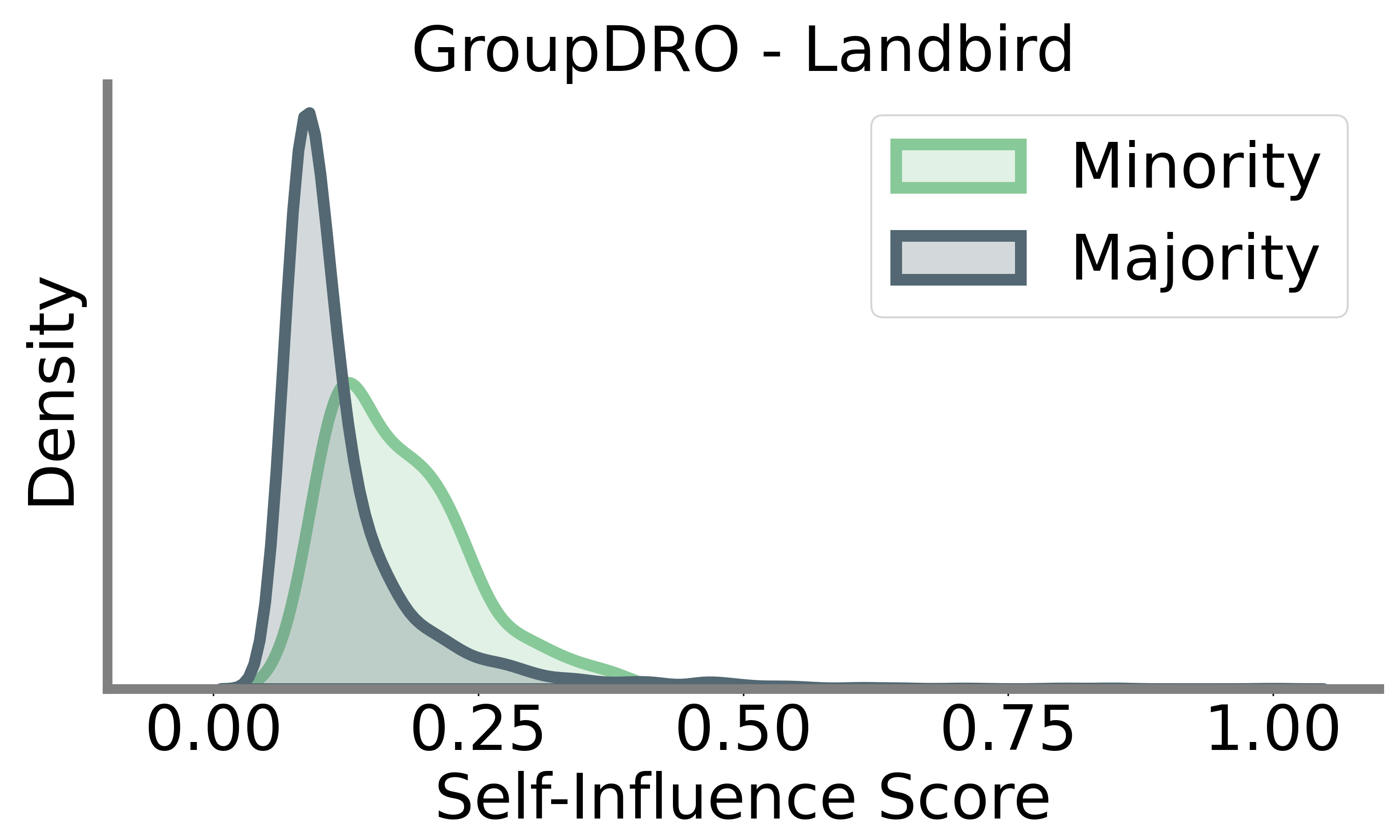}
        \end{subfigure}
        \hfill
        \begin{subfigure}[b]{0.32\linewidth}
            \centering
            \includegraphics[width=1.0\linewidth]{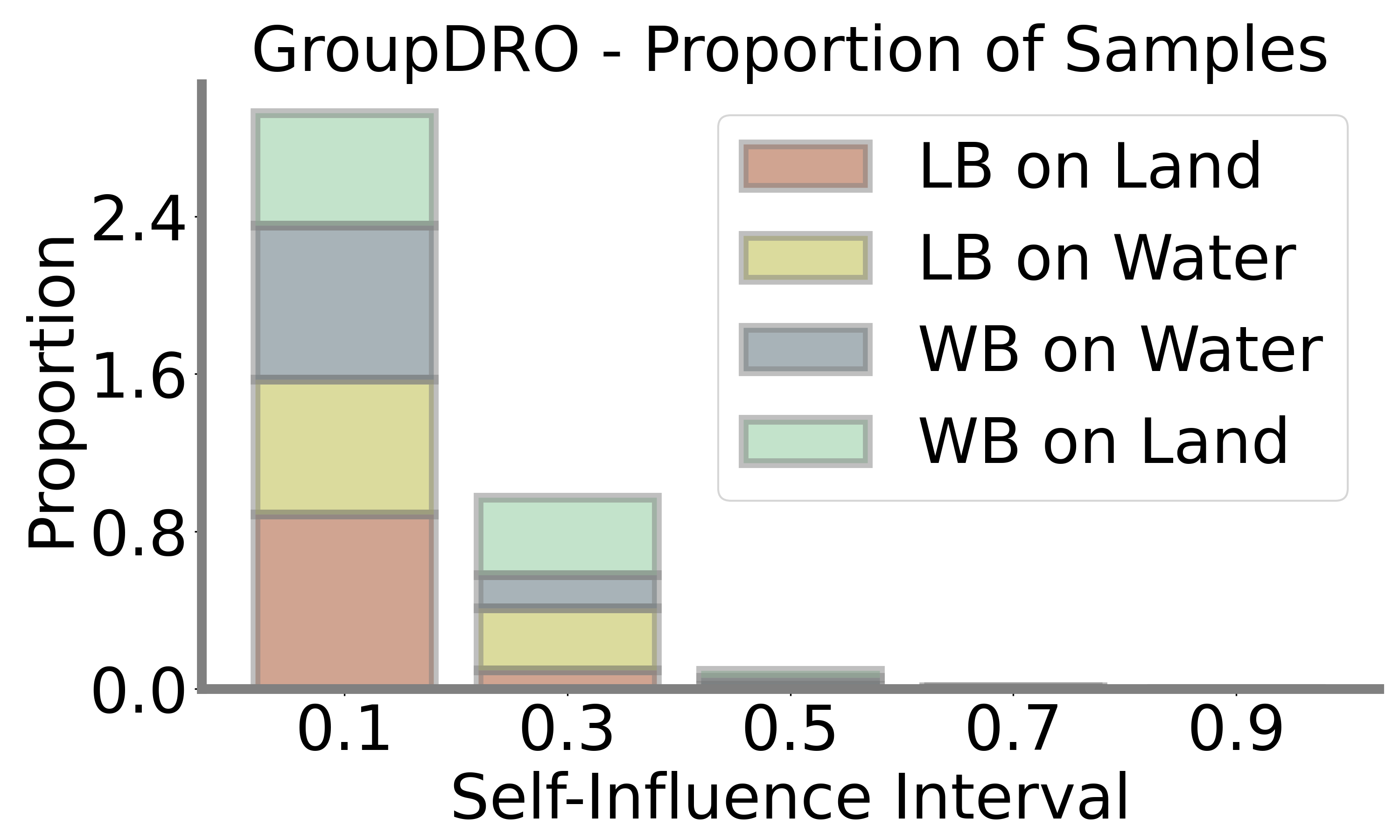}
    \end{subfigure}
    \label{fig:self_influences_groupdro_app}
    \end{subfigure}
    \hfill
    
    \begin{subfigure}[b]{1.0\linewidth}
        \begin{subfigure}[b]{0.32\linewidth}
            \centering
            \includegraphics[width=1.0\linewidth]{figures/self-influence/distribution/MAT_Waterbird.png}
        \end{subfigure}
        \hfill
        \begin{subfigure}[b]{0.32\linewidth}
            \centering
            \includegraphics[width=1.0\linewidth]{figures/self-influence/distribution/MAT_Landbird.png}
        \end{subfigure}
        \hfill
        \begin{subfigure}[b]{0.32\linewidth}
            \centering
            \includegraphics[width=1.0\linewidth]{figures/self-influence/proportion/MAT.png}
    \end{subfigure}
    \label{fig:self_influences_mat_app}
    \end{subfigure}
    
    \caption{Self-Influence estimation of the Waterbird groups by ERM, GroupDRO and MAT. Each row corresponds to one of the training methods (ERM, GroupDRO, and MAT). The left and middle columns show the density distributions of self-influence scores for minority and majority subpopulations within each method, separately for Waterbirds and Landbirds. The right column depicts the proportion of samples across different self-influence score intervals for each subpopulation (LB on Land, LB on Water, WB on Water, WB on Land). The results illustrate that MAT achieves the most uniform and reduced self-influence distribution, indicating effective mitigation of memorization across subpopulations, especially in minority groups. GroupDRO also shows some reduction in self-influence scores for the critical group in this dataset, WB on Land.}
\label{fig:self_influences_extended}
\end{figure}


\section{Multiple spurious features setup}
To test the capability of MAT in handling multiple spurious features, we have adapted our original setup in Section \ref{sec:exacerbate} and added a second spurious feature. Specifically, the data generation process is defined as:

\begin{equation*}
    \bm{x} = \left[
        \begin{aligned}
            x_y \sim \mathcal{N}(y, \sigma_y^2) \\
            x_{a_1} \sim \mathcal{N}(a, \sigma_a^2)  \\
            x_{a_2} \sim \mathcal{N}(a, \sigma_a^2)  \\
            \bm{\epsilon} \sim \mathcal{N}(0, \sigma^2_{\bm{\epsilon}} \bm{I})
        \end{aligned} \right] \in \mathbb{R}^{d+2} \ \ \text{where,} \ 
    a = 
    \begin{cases} 
        y & \text{w.p. } \rho \\
        -y & \text{w.p. } 1 - \rho
    \end{cases} \ \text{and} \
    \rho = 
    \begin{cases} 
        \rho^{\text{tr}} & \text{(train)} \\
        0.5 & \text{(test)}
    \end{cases}.
\end{equation*}

\paragraph{Illustrative Scenarios.} Similarly, we chose \( \rho^{\text{tr}} = 0.9 \) (i.e., the correlation of spurious features with labels) and \( \gamma = 5 \) (i.e., the strength of the spurious features) for both spurious features. In this new setup, with one main feature and two spurious features, there are four groups within each class, resulting in a total of eight groups. For simplicity in our visualization, we identified the group with the highest number of samples as the majority group and the group with the fewest samples as the minority group (i.e., the worst-performing group). Note that the number of samples in this minority group is even smaller than before, as the addition of the second spurious feature splits the previous minority group, introduced by the first spurious feature, into two even smaller groups.

Consistent with the single spurious feature setup (Section \ref{sec:exacerbate}), Figure \ref{fig:toy_classification_multi} shows that ERM, with memorization capacity, struggles to learn generalizable features and achieves only random-guess performance for the minority samples. In contrast, MAT effectively handles this scenario and nearly achieves perfect test accuracy for all groups.

\begin{figure}[ht!]
    \centering
    \includegraphics[width=.32\linewidth]{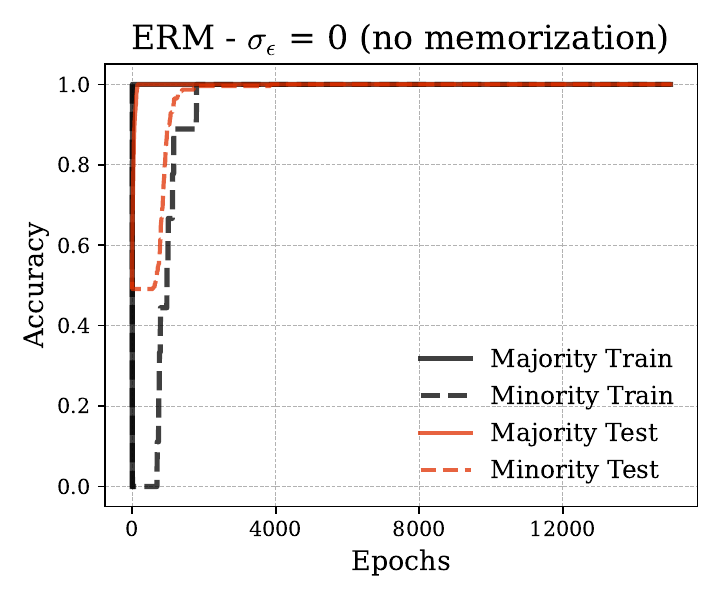}
    \includegraphics[width=.32\linewidth]{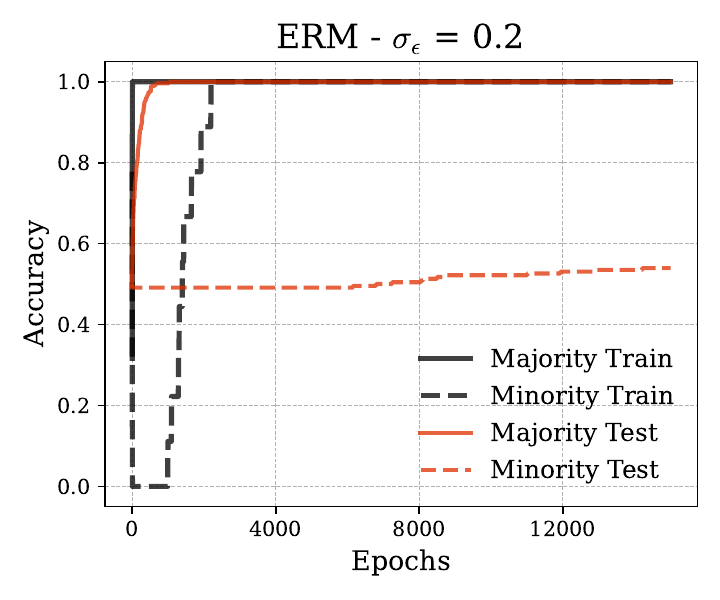}
    \includegraphics[width=.32\linewidth]{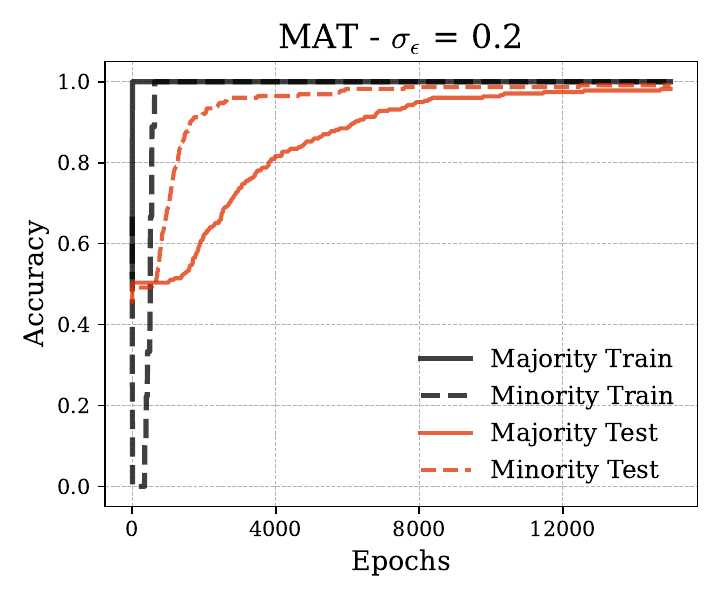}
    \caption{Similar to Figure \ref{fig:toy_classification}, this figure illustrates the performance of ERM and MAT across different memorization setups in a scenario with \textbf{multiple spurious features}. The left panel shows ERM without memorization, where the model generalizes well to both majority and minority groups. The middle panel shows ERM with memorization capacity, where the model exhibits poor generalization for minority test samples. The right panel illustrates MAT, where it effectively learns invariant features and generalizes well to all groups.}
    \label{fig:toy_classification_multi}
\end{figure}

\section{Proofs}
\label{sec:proofs}

\begin{lemma}
\label{lemma:softmax_shift}
Let \( p_1(y=j \mid \bm{x}) = \frac{e^{\phi_j(\bm{x})}}{\sum_{i=1}^k e^{\phi_i(\bm{x})}} \) be a softmax over the logits \( \phi_j(\bm{x}) \), and define \( p_2(y=j \mid \bm{x}) \) such that \( p_2(y=j \mid \bm{x}) \propto w(j, \bm{x}) \cdot p_1(y=j \mid \bm{x}) \) for some weighting function \( w(j, \bm{x}) \). Then:
\[
p_2(y=j \mid \bm{x}) = \frac{e^{\phi_j(\bm{x}) + \log w(j, \bm{x})}}{\sum_{i=1}^k e^{\phi_i(\bm{x}) + \log w(i, \bm{x})}}.
\]
\end{lemma}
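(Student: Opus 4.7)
The statement is essentially an algebraic identity, so my plan is a short direct computation rather than anything clever. First I would write out the proportionality explicitly: by definition there exists a normalizer $Z(\bm{x})$ (independent of $j$) such that
\[
p_2(y=j \mid \bm{x}) \;=\; \frac{w(j,\bm{x})\, p_1(y=j \mid \bm{x})}{Z(\bm{x})}, \qquad Z(\bm{x}) \;=\; \sum_{i=1}^k w(i,\bm{x})\, p_1(y=i \mid \bm{x}),
\]
where the form of $Z(\bm{x})$ is forced by the requirement $\sum_j p_2(y=j\mid\bm{x}) = 1$.

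Next I would substitute the softmax expression for $p_1$ into both the numerator and the denominator. The common denominator $\sum_i e^{\phi_i(\bm{x})}$ appearing in $p_1$ cancels between numerator and normalizer, leaving
\[
p_2(y=j \mid \bm{x}) \;=\; \frac{w(j,\bm{x})\, e^{\phi_j(\bm{x})}}{\sum_{i=1}^k w(i,\bm{x})\, e^{\phi_i(\bm{x})}}.
\]

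Finally, assuming $w(j,\bm{x}) > 0$ so that $\log w(j,\bm{x})$ is well defined, I would rewrite $w(j,\bm{x}) = e^{\log w(j,\bm{x})}$ and absorb it into the exponent via $w(j,\bm{x})\, e^{\phi_j(\bm{x})} = e^{\phi_j(\bm{x}) + \log w(j,\bm{x})}$, and similarly in the denominator, yielding the claimed form. The only obstacle worth flagging is the positivity assumption on $w$: the result as stated only makes sense when $w(j,\bm{x}) > 0$ for all $j$, which is the implicit domain of $\log$; in the MAT application this is satisfied because $w(j,\bm{x}) = \overline{p}^{\text{ho}}(j \mid \bm{x})$ is a (strictly positive) softmax output, so the identity applies directly.
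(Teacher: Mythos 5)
Your proof is correct and follows essentially the same route as the paper's: substitute the softmax form of \(p_1\) into the proportionality, cancel the \(j\)-independent normalizer, rewrite \(w(j,\bm{x})\,e^{\phi_j(\bm{x})}=e^{\phi_j(\bm{x})+\log w(j,\bm{x})}\), and normalize. Your explicit remark that \(w(j,\bm{x})>0\) is needed for \(\log w\) to be defined (and holds in the MAT application) is a small but worthwhile addition the paper leaves implicit.
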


\begin{proof}
Starting with the definition:
\[
p_2(y=j \mid \bm{x}) \propto w(j, \bm{x}) \cdot p_1(y=j \mid \bm{x}).
\]

Substituting the expression for \( p_1(y=j \mid \bm{x}) \):
\[
p_2(y=j \mid \bm{x}) \propto w(j, \bm{x}) \cdot \frac{e^{\phi_j(\bm{x})}}{\sum_{i=1}^k e^{\phi_i(\bm{x})}}.
\]

Since the denominator \( \sum_{i=1}^k e^{\phi_i(\bm{x})} \) is constant with respect to \( j \), it can be absorbed into the proportionality constant:
\[
p_2(y=j \mid \bm{x}) \propto w(j, \bm{x}) \cdot e^{\phi_j(\bm{x})}.
\]

Using the property \( w(j, \bm{x}) \cdot e^{\phi_j(\bm{x})} = e^{\phi_j(\bm{x}) + \log w(j, \bm{x})} \), we have:
\[
p_2(y=j \mid \bm{x}) \propto e^{\phi_j(\bm{x}) + \log w(j, \bm{x})}.
\]

To obtain a valid probability distribution, we normalize by computing the normalization constant \( Z \):
\[
Z = \sum_{i=1}^k e^{\phi_i(\bm{x}) + \log w(i, \bm{x})}.
\]

Therefore, the normalized \( p_2(y=j \mid \bm{x}) \) is:
\[
p_2(y=j \mid \bm{x}) = \frac{e^{\phi_j(\bm{x}) + \log w(j, \bm{x})}}{Z} = \frac{e^{\phi_j(\bm{x}) + \log w(j, \bm{x})}}{\sum_{i=1}^k e^{\phi_i(\bm{x}) + \log w(i, \bm{x})}}.
\]

This completes the proof.
\end{proof}

\section{Proof of Theorem \ref{thm:memorization}}
\label{sec:memorization_theorem_proof}
Setting the derivative of the objective function \( \mathcal{L}^{\text{ERM}} \) (Equation \eqref{eq:obj}) with respect to \( w \) to zero gives the normal equation according to \cref{obj_derivation}

\[
\frac{\partial \mathcal{L}^{\text{ERM}}}{\partial w} = \frac{1}{n}\sum_{j=1}^n(s(\bm{x}_i^\top w)-y_i)\bm{x}_i + \lambda w = 0,
\]

where \(s(\bm{x}_i^\top w) = \hat{p}^{\text{tr}}(y \mid \bm{x}_i; w)\), and solving for $w$ then gives
\[
    \widehat w = \sum_{i=1}^n\alpha_i \bm{x}_i,
\]

with \(\alpha_i := \frac{\pi_i-\widehat \pi_i}{\eta}\), with \(\pi_i := 1_{\{y_i > 0\}},\,\widehat \pi_i := s(v_i),\,v_i:=\bm{x}_i^\top \widehat w,\,\eta := n\lambda.\)

Note that the $v_i$'s correspond to logits, while the $\alpha = (\alpha_1,\ldots,\alpha_n) \in \mathbb R^n$ should be thought of as the dual representation of the weights vector $\widehat w$. Indeed, by construction, one has
\begin{eqnarray}
\widehat w = \bm{X}^\top \alpha,
\end{eqnarray}
where $\bm{X} \in \mathbb R^{n \times d}$ is the design matrix.

\begin{mdframed}
    \textbf{Notation.} Henceforth, with abuse of notation but WLOG, we will write $x_a=x_{spu}=a$ for the spurious feature, and therefore write $\bm{x}_i=(y_i,a_i,\epsilon_i)$, where $\bm{\epsilon}_i$ are the example-specific features for the $i$ example and $y_i \in \{\pm 1\}$ is its label.
\end{mdframed}

Our mission is then to derive necessary and sufficient conditions for $e > 0$, where

\begin{eqnarray}
    e := \gamma \widehat w_{spure}-\widehat w_{core} = \sum_{i=1}^n(\gamma^2 a_i-y_i)\alpha_i.
\end{eqnarray}

\subsection{Fixed-Point Equations}
Define subsets $I_\pm,S,L \subseteq [n]$ and integers $m,k \in [n]$ by 
\begin{align}
I_\pm &:= \{i \in [n] \mid y_i=\pm 1\}, \\
S     &:= \{i \in [n] \mid a_i=\gamma y_i\}, \\
L     &:= \{i \in [n] \mid a_i = -\gamma y_i\}, \\
m     &:= \mathbb E\,|S| = pn, \quad k := \mathbb E\, |L|=(1-p)n.
\end{align}
Thus, $S$ (resp. $L$) corresponds to the sample indices in the majority (resp. the minority) class.

One computes the logits as follows
\begin{eqnarray}
\begin{split}
    v_i &= \bm{x}_i^\top \widehat w = \sum_{j=1}^n \alpha_j \bm{x}_j^\top \bm{x}_i = {\sum_{j=1}^n} \alpha_j y_jy_i + \gamma^2\sum_{j=1}^n \alpha_j {a_j} a_i + \sum_{j=1}^n \alpha_j \epsilon_j^\top \epsilon_i\\
    &= \begin{cases}
        a + \sum_{j=1}^n \alpha_j \epsilon_j^\top \epsilon_i,&\mbox{ if }i \in S \cap I_+,\\
        b + \sum_{j=1}^n\alpha_j \epsilon_j^\top \epsilon_i,&\mbox{ if }i \in S \cap I_-,\\
        c + \sum_{j=1}^n \alpha_j \epsilon_j^\top \epsilon_i,&\mbox{ if }i \in L \cap I_+,\\
        e + \sum_{j=1}^n\alpha_j \epsilon_j^\top \epsilon_i,&\mbox{ if }i \in L \cap I_-,
    \end{cases}
    \end{split}
\end{eqnarray}
where $a,b,c,e\in \mathbb R$ are defined by
\begin{eqnarray}
\label{eq:abce}
\begin{split}
a &:= \gamma \widehat w_{spu} + \widehat w_{core} = \sum_{j=1}^n (\gamma^2 {a_j}+y_j)\alpha_j,\\
b &:= -\gamma\widehat w_{spu}-\widehat w_{core} = -\sum_{j=1}^n (\gamma^2 {a_j}+y_j)\alpha_j,\\
e &:= \gamma \widehat w_{spu} - \widehat w_{core} = \sum_{j=1}^n (\gamma^2 {a_j}-y_j)\alpha_j,\\
c &:= \widehat w_{core}-\gamma \widehat w_{spu} = \sum_{j=1}^n (-\gamma^2 {a_j}+y_j)\alpha_j.
\end{split}
\end{eqnarray}
Observe that
\begin{eqnarray}
    b=-a,\quad c=-e.
\end{eqnarray}

The following lemma will be crucial to our proof.
\begin{lemma}
\label{lm:xyz}
    If $a \ge 0$ and $e \ge 0$, then part \ref{thm:noisy_input} of Theorem \ref{thm:memorization} holds. On the other hand, if $a \ge 0$ and $e \le 0$, then part \ref{thm:noiseless_input} of Theorem \ref{thm:memorization} holds.
\end{lemma}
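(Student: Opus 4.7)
The approach is to reduce both conclusions to a four‑entry sign table on the mean test logit. For a test point with true label $y \in \{\pm 1\}$ and spurious attribute value $\tilde{a} \in \{\pm 1\}$, the logit $\bm{x}^\top \widehat{\bm{w}}_{\mathrm{ERM}}$ decomposes as
\[
y\,\widehat{w}_{\mathrm{core}} \;+\; \gamma \tilde{a}\,\widehat{w}_{\mathrm{spu}} \;+\; \bigl[(x_y-y)\widehat{w}_{\mathrm{core}} + (\gamma x_a-\tilde{a})\widehat{w}_{\mathrm{spu}} + \bm{\epsilon}^\top \widehat{\bm{w}}_{\bm{\epsilon}}\bigr],
\]
where the bracketed term is a mean‑zero sub‑Gaussian fluctuation (the test $\bm{\epsilon}$ is independent of $\widehat{\bm{w}}$). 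By the definitions in \eqref{eq:abce}, the deterministic part evaluates to $+a,\ -e,\ +e,\ -a$ for the four configurations $(y,\tilde{a})=(+1,+1),(+1,-1),(-1,+1),(-1,-1)$, respectively.

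In the first case $a \ge 0$ and $e \ge 0$, these four deterministic logits carry signs $(+,-,+,-)$, which coincide exactly with the values of $\tilde{a}$ in each row. Hence, once the fluctuation is smaller than $\min(|a|,|e|)$, the classifier's prediction equals $\tilde{a}$ in every configuration, giving $p(\widehat{y}(\bm{x})=a)\to 1$ as claimed in part~\ref{thm:noisy_input}. In the second case $a \ge 0$ and $e \le 0$, the same four deterministic logits now carry signs $(+,+,-,-)$, which coincide with $y$ in every row, so the prediction matches the label and $p(\widehat{y}(\bm{x})=y)\to 1$, i.e.\ part~\ref{thm:noiseless_input}.

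The main obstacle is upgrading these deterministic sign identifications to a probability‑one asymptotic, which requires two ingredients. First, a concentration bound showing that the bracketed fluctuation is $o(\min(|a|,|e|))$ with high probability; by standard sub‑Gaussian tail estimates this reduces to controlling $\|\widehat{\bm{w}}_{\bm{\epsilon}}\|$, which follows from the dual representation $\widehat{\bm{w}}=\bm{X}^\top\alpha$ together with near‑orthogonality of the training example‑specific features under $d \gg \log n$. Second, non‑degeneracy that rules out the knife‑edge events $|a|,|e|\to 0$; this will come from the theorem's scaling assumptions $\rho^{\mathrm{tr}}>0.501$ (yielding a strictly positive majority logit $a$), together with either $\sigma_{\bm{\epsilon}}\to 0$ (forcing $e<0$ for part~\ref{thm:noiseless_input}) or $\gamma \gg \sigma_{\bm{\epsilon}}\sqrt{d/m}$ (forcing $e>0$ for part~\ref{thm:noisy_input}), both of which are to be established in the subsequent analysis of the dual coefficients $\alpha_i$. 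The lemma itself therefore amounts to the sign‑table computation above, with the probabilistic upgrade factored out.
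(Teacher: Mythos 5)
Your proposal is correct and follows essentially the same route as the paper: decompose the test logit into a deterministic part whose four $(y,\tilde a)$ values are $\pm a,\pm e$ (so that $a\ge 0,\,e\ge 0$ aligns the sign with the spurious attribute and $a\ge 0,\,e\le 0$ aligns it with the label) plus a fluctuation that vanishes via the dual representation $\widehat{\bm w}=\bm X^\top\alpha$. The only minor difference is in how the fluctuation is killed: the paper bounds its variance by $\sigma_\epsilon^2\|\widehat{\bm w}_{\bm\epsilon}\|^2\lesssim \|\alpha\|^2\le 1/(n\lambda^2)$, which vanishes under $\lambda\sqrt n\to\infty$, rather than invoking $d\gg\log n$ near-orthogonality at this step.
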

\begin{proof}
    Indeed, for a random test (i.e., held-out) point $(\bm{x},a,y)$, we have 
    \begin{eqnarray*}
    \begin{split}
    p(C_{ERM}(\bm{x}) = C_{spu}(\bm{x})) &= p(\bm{x}_{spu} \times \bm{x}^\top \widehat w \ge 0) \\
    &= p(y\bm{x}_{spu} \widehat w_{core} + \gamma^2\widehat w_{spu} + \bm{x}_{spu}\bm{x}_\epsilon^\top \widehat w_\epsilon \ge 0) \\
    &= p(-\bm{x}_{spu}\bm{x}_\epsilon^\top \widehat w_\epsilon \le \gamma^2\widehat w_{spu} + y\bm{x}_{spu} \widehat w_{core})
    \end{split}
    \end{eqnarray*}

Now, independent of $y$, the random variable $-\bm{x}_{spu}\bm{x}_\epsilon^\top \widehat w$ has distribution $N(0,\sigma_\epsilon^2\|\widehat w_\epsilon\|^2)$. Now, because $\widehat w = \bm{X}^\top \alpha$ by construction, the variance can be written as $\sigma_\epsilon^2\|\widehat w_\epsilon\|^2 = \sigma_\epsilon^2 \|\bm{X}_\epsilon^\top \alpha\|^2$, which is itself chi-squared random variable which concentrates around its mean $\sigma_\epsilon^4 \|\alpha\|^2$.
Furthermore, thanks to \eqref{eq:alphanorm}, $\|\alpha\|^2 \le 1/(n\lambda^2)$, which vanishes in the limit 
\begin{eqnarray}
\label{eq:nlambda-limit}
\lambda \to 0^+, \ n \to \infty, \ \lambda \sqrt{n} \to \infty.
\end{eqnarray}
We deduce that
\begin{align*} 
p(C_{ERM}(\bm{x}) = C_{spu}(\bm{x})) &\to p(\gamma^2 \widehat w_{spu} + y \bm{x}_{spu} \widehat w_{core} \ge 0)\\
&= \rho 1_{\{\gamma \widehat w_{spu} + \widehat w_{core} \ge 0\}} + (1-\rho)1_{\{\gamma \widehat w_{spu} - \widehat w_{core} \ge 0\}}\\
&= \rho 1_{\{a \ge 0\}} + (1-\rho)1_{\{e \ge 0\}}.
\end{align*}

Thus, if $a \ge 0$ and $e \ge 0$, we must have $p(C_{ERM}(\bm{x}) = C_{spu}(\bm{x})) = \rho + 1-\rho = 1$, that is, part \ref{thm:noisy_input} of Theorem \ref{thm:memorization} holds. In other words, 
\[
p \left(\widehat{y}_{\text{ERM}}^{\text{ho}}(\bm{x}) = a \right) \to 1.
\]

On the other hand, one has
\begin{align*}
    p(C_{ERM}(\bm{x}) = C_{core}(\bm{x})) &= p(\bm{x}_{core} \times \bm{x}^\top \widehat w \ge 0) = p(\widehat w_{core} + y\bm{x}_{spu}\widehat w_{spu} \ge 0)\\
    &= q 1_{\{\widehat w_{core} + \gamma \widehat w_{spu} \ge 0\}} + (1-q)1_{\{\widehat w_{core} - \gamma \widehat w_{spu} \ge 0\}}\\
    &= q1_{\{a \ge 0\}} + (1-q)1_{\{e \le 0\}},
\end{align*}
where $q := p(a=y)$.

We deduce that if $a \ge 0$ and $e \le 0$, then $p(C_{ERM}(\bm{x}) = C_{core}(\bm{x})) = q+1-q=1$, i.e part \ref{thm:noisy_input} of Theorem \ref{thm:memorization} holds. In other words,
\[
    p \left(\widehat{y}_{\text{ERM}}^{\text{ho}}(\bm{x}) = y \right) \to 1.
\]
\end{proof}

\subsection{Structure of the Dual Weights}
The following result shows that the dual weights $\alpha_1,\ldots,\alpha_n$ cluster into 4 lumps corresponding to the following 4 sets of indices $S \cap I_+$, $S \cap I_-$, $L \cap I_+$, and $L \cap I_-$.

\begin{lemma}
\label{lm:alpha-structure}
 There exist positive constants $A,B,C,E > 0$ such that the following holds with large probability uniformly over all indices $i \in [n]$
\begin{eqnarray}
\label{eq:ABCD}
\alpha_i \simeq     \begin{cases}
        A,&\mbox{ if }i \in S \cap I_+,\\
        -B,&\mbox{ if }i \in S \cap I_-,\\
        C,&\mbox{ if }i \in L \cap I_+,\\
        -E,&\mbox{ if }i \in L \cap I_-.
    \end{cases}
\end{eqnarray}  
Furthermore, the empirical probabilities predicted by ERM are given by
\begin{align}
    \widehat \pi_i &= 1_{\{y_i = 1\}} - \eta \alpha_i = \begin{cases}
        1-\eta A,&\mbox{ if }i \in S \cap I_+,\\
        \eta B,&\mbox{ if }i \in S \cap I_-,\\
        1-\eta C,&\mbox{ if }i \in L \cap I_+,\\
        \eta E,&\mbox{ if }i \in L \cap I_-. 
    \end{cases}
\end{align}
\end{lemma}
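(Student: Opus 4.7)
The plan is to show that, up to perturbations that are uniformly $o(1)$ with high probability, the dual weight $\alpha_i$ depends on the training example only through the group $(S\cap I_\pm,\,L\cap I_\pm)$ to which it belongs. This reduces to analyzing the first-order optimality condition $\alpha_i = (\pi_i - s(v_i))/\eta$ once it is shown that the logit $v_i$ is itself an approximately piecewise-constant function of the group.

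I would start from the decomposition of the logits already derived at the top of \S\ref{sec:memorization_theorem_proof}, namely $v_i = t_i + \sum_{j=1}^n \alpha_j\,\epsilon_j^\top\epsilon_i$, where $t_i\in\{a,b,c,e\}$ is the group-dependent constant from \eqref{eq:abce}. The noise sum splits further into the self-term $\alpha_i\|\epsilon_i\|^2$ and the cross-term $\xi_i:=\sum_{j\ne i}\alpha_j\,\epsilon_j^\top\epsilon_i$. Chi-squared concentration together with the hypothesis $d\gg\log n$ gives $\|\epsilon_i\|^2 = d\sigma_\epsilon^2(1+o(1))$ uniformly in $i$. Conditional on $\alpha$ and $\epsilon_i$, the cross-term $\xi_i$ is a mean-zero Gaussian of variance $\sigma_\epsilon^2\|\epsilon_i\|^2\|\alpha_{-i}\|^2$; combined with the a priori bound $\|\alpha\|^2\le 1/(n\lambda^2)$ from \eqref{eq:alphanorm}, this gives $\xi_i = O\!\bigl(\sigma_\epsilon^2\sqrt{d}/(\lambda\sqrt{n})\bigr)$, which is $o(1)$ in the regime $\lambda\to 0^+$, $\lambda\sqrt{n}\to\infty$, $d\gg\log n$. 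A union bound over the $n$ indices preserves uniformity.

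Substituting back into the optimality condition yields, for each group, a one-dimensional self-consistent equation of the form
\[
\alpha_i \;=\; \frac{1}{\eta}\Bigl(\pi_i - s\bigl(t_i + d\sigma_\epsilon^2\,\alpha_i + o(1)\bigr)\Bigr),
\]
where $t_i\in\{a,b,c,e\}$ and $\pi_i\in\{0,1\}$ are constants on each group. The right-hand side is a strictly decreasing, bounded function of $\alpha_i$ through the sigmoid $s$, so this scalar equation has a unique root whose sign inherits the sign of $\pi_i - 1/2$: strictly positive on $S\cap I_+$ and $L\cap I_+$, strictly negative on $S\cap I_-$ and $L\cap I_-$. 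Naming these roots $A, -B, C, -E$ respectively yields the claimed piecewise-constant structure with $A,B,C,E>0$. Global consistency with the fact that $a,b,c,e$ are themselves linear combinations of the $\alpha_j$'s can be enforced by viewing $(A,B,C,E)$ as a fixed point of a continuous self-map on a compact box (whose entries are bounded by $1/\eta$) and invoking Brouwer, or more quantitatively by showing the map is a contraction in the relevant regime.

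The formula $\widehat{\pi}_i = 1_{\{y_i=1\}} - \eta\alpha_i$ is then just algebraic rearrangement of the defining identity $\alpha_i=(\pi_i-\widehat{\pi}_i)/\eta$ together with $\pi_i = 1_{\{y_i=1\}}$. The main technical obstacle is the self-noise term $\alpha_i\|\epsilon_i\|^2$, which does \emph{not} vanish asymptotically and has the same order as the group-dependent logits $t_i$; the key step is to absorb it \emph{inside} the fixed-point equation rather than treat it as a perturbation, so that the four-cluster structure of $\alpha$ emerges from the monotonicity of the sigmoid applied to one effective scalar variable per group, rather than from a naive linearization which would be invalid in this regime.
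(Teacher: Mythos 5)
Your proposal follows essentially the same route as the paper's proof: the same decomposition $v_i = t_i + \alpha_i\|\bm{\epsilon}_i\|^2 + \xi_i$ with $t_i \in \{a,b,c,e\}$, the same a priori bound $\|\alpha\| \le 1/(\lambda\sqrt{n})$ to control the cross-term, absorption of the self-term $\sigma_\epsilon^2 d\,\alpha_i$ into a per-group scalar equation resolved by monotonicity of the sigmoid (with positivity of $A,B,C,E$ from $0 < s < 1$), and the same algebraic rearrangement for $\widehat{\pi}_i$; your Brouwer/contraction remark on joint consistency of $(A,B,C,E)$ with $a,b,c,e$ is an extra touch the paper defers to its later fixed-point analysis. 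The one imprecision is your claim that $\xi_i = O\bigl(\sigma_\epsilon^2\sqrt{d}/(\lambda\sqrt{n})\bigr)$ is absolutely $o(1)$, which does not follow from $\lambda\sqrt{n}\to\infty$ and $d \gg \log n$ alone (nothing caps $d$); the paper instead bounds $\xi_i$ relative to $\sigma_\epsilon^2 d$, the coefficient of the self-term and the scale that actually governs the perturbation of the per-group root, and your own estimate yields exactly that relative $o(1)$ after dividing by $\sigma_\epsilon^2 d$, so the argument is repaired by restating it at the correct scale.
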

\begin{proof}

First observe that
\begin{eqnarray}
    \label{eq:alphanorm}
    \|\alpha\| \le \frac{1}{\lambda \sqrt n}.
\end{eqnarray}

Indeed, one computes
\begin{eqnarray*}
    \|\alpha\|^2 = \frac{1}{\eta^2}\sum_{i=1}^n(\pi_i-\widehat \pi_i)^2 \le \frac{1}{\eta^2}\sum_{i=1}^n 1 \le \frac{n}{\eta^2} = \frac{1}{\lambda^2 n}.
\end{eqnarray*}

Next, observe that $\sum_j \alpha_j \epsilon_j^\top \epsilon_i = \alpha_i\|\epsilon_i\|^2 + \sum_{j \ne i}\alpha_j \epsilon_j^\top \epsilon_i \simeq \sigma_\epsilon^2 \alpha_i d$. This is because $\alpha_i\|\epsilon_i\|^2$ concentrates around it mean which equals $\sigma_\epsilon^2 \alpha_i d$, while w.h.p,
$$
\frac{1}{\sigma_\epsilon^2 d}\sup_{i \in [n]} \left|\sum_{j \ne i}\alpha_j \epsilon_j^\top \epsilon_i\right| \lesssim \|\alpha\|\sqrt{\frac{n\log n}{d}} = \sigma_\epsilon \|\alpha\|\sqrt n \cdot \sqrt{\frac{\log n}{d}}\le \sigma_\epsilon\lambda  \sqrt{\frac{\log n}{d}} = o(1).
$$

The above is because $\lambda \to 0$ and $(\log n)/d \to 0$ by assumption. Henceforth we simply ignore the contributions of the terms $\sum_{j \ne i}\alpha_j \epsilon_j^\top \epsilon_i$. We get the following equations in the limit \ref{eq:nlambda-limit}
\begin{eqnarray}
\begin{split}
    v_i &=  \begin{cases}
        \sigma_\epsilon^2 \alpha_i d  + a,&\mbox{ if }i \in S \cap I_+,\\
        \sigma_\epsilon^2 \alpha_i d  + b,&\mbox{ if }i \in S \cap I_-,\\
        \sigma_\epsilon^2 \alpha_i d +c,&\mbox{ if }i \in L \cap I_+,\\
        \sigma_\epsilon^2 \alpha_i d +e,&\mbox{ if }i \in L \cap I_-,
    \end{cases}\\
    \eta \alpha_i &= y_i-s(v_i) = 
    \begin{cases}
        1-s(\sigma_\epsilon^2 \alpha_i d +a),&\mbox{ if }i \in S \cap I_+,\\
        -s(\sigma_\epsilon^2 \alpha_i d +b),&\mbox{ if }i \in S \cap I_-,\\
        1-s(\sigma_\epsilon^2 \alpha_i d +c),&\mbox{ if }i \in L \cap I_+,\\
        -s(\sigma_\epsilon^2 \alpha_i d +e),&\mbox{ if }i \in L \cap I_-.
    \end{cases}
    \end{split}
    \label{eq:valpha}
\end{eqnarray}
Now, because of monotonicity of $\sigma$, we can find $A,B,C,E > 0$ such that
\begin{eqnarray*}
\alpha_i =     \begin{cases}
        A,&\mbox{ if }i \in S \cap I_+,\\
        -B,&\mbox{ if }i \in S \cap I_-,\\
        C,&\mbox{ if }i \in L \cap I_+,\\
        -E,&\mbox{ if }i \in L \cap I_-,
    \end{cases}
\end{eqnarray*}
as claimed.
\end{proof}

We will make use of the following lemma.
\begin{lemma}
    In the unregularized limit $\lambda \to 0^+$, it holds that $\eta A,\eta B,\eta C,\eta E \in [0,1/2]$.
    \label{lm:MLE}
\end{lemma}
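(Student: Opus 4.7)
}

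The plan is to read the four identities in Lemma~\ref{lm:alpha-structure} as sigmoid constraints, and then reduce the claim to perfect training accuracy. Concretely, the KKT condition yields $\eta\alpha_i = \pi_i - \widehat{\pi}_i$ with $\pi_i\in\{0,1\}$ and $\widehat{\pi}_i = s(v_i)\in(0,1)$. Using the structure in \eqref{eq:valpha} this specializes to $\eta A = 1 - s(\sigma_\epsilon^2 A d + a)$, $\eta C = 1 - s(\sigma_\epsilon^2 C d + e)$, $\eta B = s(-\sigma_\epsilon^2 B d + b)$, and $\eta E = s(-\sigma_\epsilon^2 E d + c)$. Since $s$ takes values in $(0,1)$, all four quantities are automatically in $[0,1]$, which handles the lower bound $\eta A,\eta B,\eta C,\eta E \ge 0$ immediately.

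The entire content of the lemma is then the upper bound $\le 1/2$. Note that $|\eta\alpha_i|\le 1/2$ if and only if $\widehat{\pi}_i \ge 1/2$ when $\pi_i=1$ and $\widehat{\pi}_i \le 1/2$ when $\pi_i=0$, i.e., if and only if $\operatorname{sign}(v_i)=y_i$ for every training index $i$. Thus the statement of Lemma~\ref{lm:MLE} is logically equivalent to perfect training accuracy on the four subgroups $S\cap I_\pm$ and $L\cap I_\pm$. So the plan is: first establish that the training examples are perfectly classified under the stated limits, and then read off the upper bound from the sigmoid identities.

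For the perfect-classification step I would use that, because $d\gg\log n$, the private features $\{\epsilon_i\}$ are approximately orthogonal, so the design matrix has rank $n$ w.h.p., and the data is linearly separable by a margin of order $\sigma_\epsilon\sqrt{d}$. In the limit $\lambda\to 0^+$ with $\lambda\sqrt{n}\to\infty$, the $\ell_2$-regularized logistic objective is dominated by the separable-loss component: any weight vector $\widehat{w}$ that fails to separate some training point can be strictly improved by adding a small multiple of the unique direction $\epsilon_i$, where the cross-entropy gain is linear in the perturbation while the regularization cost is quadratic and scales with $\lambda\sigma_\epsilon^2 d$, which is subdominant under the assumed scaling. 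Consequently $\operatorname{sign}(v_i)=y_i$ for all $i$ in the limit. Once this is in hand, monotonicity of $s$ with $s(0)=1/2$ gives $s(v_i)\ge 1/2$ for $y_i=+1$ (yielding $\eta A,\eta C\le 1/2$) and $s(v_i)\le 1/2$ for $y_i=-1$ (yielding $\eta B,\eta E\le 1/2$), completing the proof.

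The main obstacle is the separability/interpolation step: making precise that in the regime $\lambda\to 0^+$, $n\to\infty$, $\lambda\sqrt{n}\to\infty$, $d\gg\log n$, the regularized MLE actually lies on the correct side of the decision boundary for every training point, uniformly in $i$. If Part~\ref{thm:perfect_training} of Theorem~\ref{thm:memorization} is proven upstream in the appendix, then this step reduces to a one-line citation and the lemma becomes a direct translation of perfect training accuracy into the sigmoid language above; otherwise I would discharge it by the perturbation comparison sketched in the previous paragraph, combined with the a priori bound $\|\alpha\|\le 1/(\lambda\sqrt n)$ from \eqref{eq:alphanorm} to control uniform deviations over $i\in[n]$.
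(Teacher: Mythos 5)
Your proposal is correct and takes essentially the same route as the paper: the paper's proof simply invokes perfect training accuracy (part~\ref{thm:perfect_training} of Theorem~\ref{thm:memorization}) and observes that $\widehat{\pi}_i \ge 1/2$ iff $y_i=1$, which is exactly your translation of the bound into correct classification of every training point via monotonicity of the sigmoid, so the separability sketch you offer as a fallback is only needed because the paper itself leaves part~\ref{thm:perfect_training} uncited elsewhere. The only blemish is a cosmetic swap of $c$ and $e$ in your $L$-group identities (the paper has $\eta C = 1-s(\sigma_\epsilon^2 C d + c)$ and $\eta E = s(-\sigma_\epsilon^2 E d + e)$ with $c=-e$), which does not affect the argument.
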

\begin{proof}
    Indeed, in that unregularized limit, ERM attains zero classification error on the training dataset (part \ref{thm:perfect_training} of Theorem \ref{thm:memorization}). This means mean that $\widehat \pi_i \ge 1/2$ iff $y_i=1$, and the result follows.
\end{proof}

\subsection{Final Touch (Proof of Theorem \ref{thm:memorization})}

We resume the proof of Theorem \ref{thm:memorization}. The scalars $A,B,C,E$ must verify
\begin{eqnarray}
\begin{split}
    \eta A &= 1-s(\sigma_\epsilon^2 Ad  + a) = s(-\sigma_\epsilon^2 Ad  - a),\\
    \eta B &= s(-\sigma_\epsilon^2 Bd+b)=s(-\sigma_\epsilon^2 Bd-a)=1-s(\sigma_\epsilon^2 Bd  + a),\\
    \eta E &= s(-\sigma_\epsilon^2 Ed +e),\\
    \eta C &= 1-s(\sigma_\epsilon^2 Cd +c) = 1-s(\sigma_\epsilon^2 Cd -e)= s(-\sigma_\epsilon^2 Cd +e).
\end{split}
\label{eq:toto}
\end{eqnarray}

We deduce that
\begin{align}
    A &= B,\quad C=E,\\
    \eta A &= s(-\sigma_\epsilon^2 Ad -a),\quad \eta E = s(-\sigma_\epsilon^2 Ed+e).
\end{align}

\paragraph{Proof of Part \ref{thm:noiseless_input} of Theorem \ref{thm:memorization}.}
In particular, for the case with no example-specific features where $\sigma_\epsilon \to 0^+$, we have $\eta A \simeq s(-a)$ and $\eta E \simeq s(e)$. We know from Lemma \ref{lm:MLE} that $\eta A,\eta E \le 1/2$. This implies $a \ge 0$ and $e \le 0$, and thanks to Lemma \ref{lm:xyz}, we deduce part \ref{thm:noiseless_input} of Theorem \ref{thm:memorization}

\paragraph{Proof of Part \ref{thm:noisy_input} of Theorem \ref{thm:memorization}.}
In remains to show that $a \ge 0$ and $e \ge 0$ in the noisy regime $\sigma_\epsilon > 0$, and then conclude via Lemma \ref{lm:xyz}. 

Define $N_1 := |S \cap I_+|$, $N_2 := |S \cap I_-|$, $N_3 := |L \cap I_+|$, $N_4 := |L \cap I_-|$. 
Note that from the definition of $a,b,c,e$ in \eqref{eq:abce}, one has
\begin{eqnarray}
\begin{split}
a &=(\gamma^2+1)(N_1+N_2) A - (\gamma^2-1)(N_3+N_4) E,\\
e &=(\gamma^2-1)(N_1+N_2)A - (\gamma^2+1)(N_3+N_4) E,\\
 b&=-a, \quad c=-e,\\
 \eta A &= s(-\sigma_\epsilon^2 Ad-a),\quad \eta E = s(-\sigma_\epsilon^2 Ed+e),\\
 B &=A,\quad C=E.
\end{split}
\end{eqnarray}
We now show that $a \ge 0$ and $e \ge 0$ under the conditions $d \gg \log n$ and $\gamma \gg \sigma_\epsilon\sqrt{d/m}$.

Indeed, under the second condition,  the following holds w.h.p
\begin{align*}
\sigma_\epsilon^2 d+(\gamma^2+1)(N_1+N_2) &= ((\gamma^2 + 1)(N_1+N_2)+\sigma_\epsilon^2 d) \simeq ((\gamma^2+1)m+\sigma_\epsilon^2 d) \\
&\simeq (\gamma^2+1)m \simeq (\gamma^2+1)(N_1+N_2),
\end{align*}

where we have used the fact that $N_1+N_2$ concentrates around its mean $m=pn$.

We deduce that
\begin{align*}
\sigma_\epsilon^2 Ad+a &= (\sigma_\epsilon^2 d+(\gamma^2+1)(N_1+N_2))A -(\gamma^2-1)(N_3+N_4)E\\
&\simeq (\gamma^2+1)(N_1+N_2)A-(\gamma^2-1)(N_3+N_4)E\\
&\simeq a,
\end{align*}

from which we get.
\begin{align*}
1/2 &\ge \eta A \ge s(-\sigma_\epsilon^2 Ad-a) = s(-(1+o(1))a) = s(-a) + o(1),
\end{align*}

i.e $s(-a) \ge 1/2-o(1)$. But this can only happen if $a \ge 0$.

Finally, the conditions $d \gg \log n$ and  $\gamma \gg \sigma_\epsilon \sqrt{d/m}$ imply $\gamma \gg K\sigma_\epsilon\sqrt{d/k}$ and $g \ge K\log(3n)$ for any constant $K > 0$. Theorem 1 of \cite{puli2023don} then gives $e = \gamma \widehat w_{spu}-\widehat w_{core} > 0$, and we are done. \qed

\subsection{Derivative of the objective function}
\label{obj_derivation}

Given the objective function (Equation \eqref{eq:obj}):
\[
\mathcal{L}^{\text{ERM}} = \frac{1}{n} \sum_{i=1}^n l(y_i, \hat{p}^{\text{tr}}(y \mid \bm{x}_i; w)) + \frac{\lambda}{2}  || \bm{w} ||^2,
\]

where \( l(y_i, \hat{p}^{\text{tr}}(y \mid \bm{x}_i; w)) \) is the cross-entropy loss for binary classification, we can write:
\[
l(y_i, \hat{p}^{\text{tr}}(y \mid \bm{x}_i; w)) = - y_i \log(\hat{p}^{\text{tr}}(y_i \mid \bm{x}_i; w)) - (1 - y_i) \log(1 - \hat{p}^{\text{tr}}(y_i \mid \bm{x}_i; w)),
\]

where \( \hat{p}^{\text{tr}}(y \mid \bm{x}_i; w) = s(\bm{x}_i^\top w) \) is the predicted probability of class \( y = 1 \), and \( s(\cdot) \) is the sigmoid function:
\[
s(\bm{x}_i^\top w) = \frac{1}{1 + e^{-\bm{x}_i^\top w}}.
\]

Now, the derivative of the \( \mathcal{L}^{\text{ERM}} \) w.r.t. \( w \) consists of two parts:

1. Derivative of the cross-entropy loss:
we compute the derivative of the cross-entropy term w.r.t. \( w \):
\[
\frac{\partial}{\partial w} l(y_i, s(\bm{x}_i^\top w)) = \frac{\partial}{\partial w} \left[ - y_i \log(s(\bm{x}_i^\top w)) - (1 - y_i) \log(1 - s(\bm{x}_i^\top w)) \right].
\]

Let \( p_i = s(\bm{x}_i^\top w) \). Using the chain rule, we first calculate the derivative of the cross-entropy loss w.r.t. \( p_i \):
\[
\frac{\partial l(y_i, p_i)}{\partial p_i} = -\frac{y_i}{p_i} + \frac{1 - y_i}{1 - p_i}.
\]

Next, we compute the derivative of \( p_i = s(\bm{x}_i^\top w) \) w.r.t. \( w \):
\[
\frac{\partial p_i}{\partial w} = s(\bm{x}_i^\top w) (1 - s(\bm{x}_i^\top w)) \bm{x}_i = p_i(1 - p_i) \bm{x}_i.
\]

Using the chain rule:
\[
\frac{\partial l(y_i, s(\bm{x}_i^\top w))}{\partial w} = \left( - \frac{y_i}{s(\bm{x}_i^\top w)} + \frac{1 - y_i}{1 - s(\bm{x}_i^\top w)} \right) \cdot s(\bm{x}_i^\top w)(1 - s(\bm{x}_i^\top w)) \bm{x}_i.
\]

Simplifying the expression:
\[
\frac{\partial l(y_i, s(\bm{x}_i^\top w))}{\partial w} = (s(\bm{x}_i^\top w) - y_i) \bm{x}_i.
\]

Thus, the derivative of the cross-entropy loss term is:
\[
\frac{\partial}{\partial w} \left( \frac{1}{n} \sum_{i=1}^n l(y_i, \hat{p}^{\text{tr}}(y \mid \bm{x}_i; w)) \right) = \frac{1}{n} \sum_{i=1}^n (s(\bm{x}_i^\top w) - y_i) \bm{x}_i.
\]

2. Derivative of the regularization term:
the second part of the objective function is the \( \ell_2 \)-regularization term:
\[
\frac{\lambda}{2} || \bm{w} ||^2.
\]

The derivative of this term w.r.t. \( w \) is straightforward:
\[
\frac{\partial}{\partial w} \left( \frac{\lambda}{2} || \bm{w} ||^2 \right) = \lambda w.
\]

Final derivative:
combining both terms, the derivative of the objective function \( \mathcal{L}^{\text{ERM}} \) w.r.t. \( w \) is:
\[
\frac{\partial \mathcal{L}^{\text{ERM}}}{\partial w} = \frac{1}{n} \sum_{i=1}^n (s(\bm{x}_i^\top w) - y_i) \bm{x}_i + \lambda w.
\]

This is the required derivative of the regularized cross-entropy loss function with respect to \( w \).

\end{document}